\documentclass{article}

\usepackage[english]{babel}

\usepackage[letterpaper,top=2cm,bottom=2cm,left=3cm,right=3cm,marginparwidth=1.75cm]{geometry}

\usepackage{microtype} 

\usepackage[font=small,labelfont=bf]{caption}
\usepackage{enumitem}
\setlist{topsep=2pt, itemsep=1pt, parsep=0pt, partopsep=0pt, leftmargin=*}

\usepackage{amsmath,amsfonts,bm}

\def\eqref#1{equation~\ref{#1}}

\def\1{\bm{1}}

\DeclareMathAlphabet{\mathsfit}{\encodingdefault}{\sfdefault}{m}{sl}
\SetMathAlphabet{\mathsfit}{bold}{\encodingdefault}{\sfdefault}{bx}{n}

\usepackage{tabularx}
\newcolumntype{L}{>{\raggedright\arraybackslash}X}
\usepackage{adjustbox}
\newcolumntype{L}{>{\raggedright\arraybackslash}X}
\usepackage{hyperref}
\usepackage{amssymb}
\usepackage{microtype}        
\usepackage{enumitem}         
\setlist{nosep,leftmargin=*}
\usepackage{wrapfig}

\usepackage{titlesec}
\titlespacing*{\section}{0pt}{1.0ex}{0.6ex}
\titlespacing*{\subsection}{0pt}{0.8ex}{0.4ex}
\titlespacing*{\subsubsection}{0pt}{0.6ex}{0.3ex}
\usepackage{medmath}
\usepackage[noend]{algpseudocode}
\usepackage{algorithm}
\usepackage{pgfplots}
\usepackage{pgfplotstable}
\usepackage{tikz}
\pgfplotsset{compat=1.18}

\usepgfplotslibrary{groupplots}

\algrenewcommand\algorithmicindent{0.8em}
\algrenewcommand\algorithmicrequire{\textbf{Input:}}
\algrenewcommand\algorithmicensure{\textbf{Output:}}

\usepackage{caption}

\usepackage{pifont}

\usepackage{adjustbox}
\usepackage{wrapfig}    
\usepackage{algorithm}
\usepackage{algpseudocode}

\usepackage{booktabs,array}
   \usepackage{algorithm,algpseudocode} 

\usepackage{tikz}
\usetikzlibrary{calc,positioning,arrows.meta,fit,shapes.geometric}
\usetikzlibrary{arrows.meta, positioning, shadows.blur, calc}
\usepackage{helvet}  
\usepackage{courier}  
\usepackage{graphicx} 
\usepackage{natbib}  
\usepackage{caption} 
\usepackage{pgfplots}
\usepgfplotslibrary{groupplots} 
\pgfplotsset{compat=1.18}      
\usepackage{soul}
\usepackage[utf8]{inputenc}
\usepackage{caption}
\usepackage{subcaption}
\usepackage{graphicx}
\usepackage{amsmath}
\usepackage{amsthm}
\usepackage{booktabs}

\usepackage[switch]{lineno}
\usepackage{times} 
\usepackage{nccmath}
\usepackage{amsmath,amssymb,amsfonts,bm}

\newtheorem{lemma}{Lemma}

\newtheorem{proposition}{Proposition}

\newtheorem{corollary}{Corollary}
\usepackage{newfloat}
\usepackage{listings}
\DeclareCaptionStyle{ruled}{labelfont=normalfont,labelsep=colon,strut=off} 
\floatstyle{ruled}
\newfloat{listing}{tb}{lst}{}
\floatname{listing}{Listing}

\usepackage{enumitem}
\newlist{tightdesc}{description}{1}
\setlist[tightdesc]{style=nextline,font=\bfseries,leftmargin=0.9em,
  labelsep=0.35em,nosep,topsep=0pt,parsep=0pt,itemsep=0pt}

\usepackage{bibentry}

\title{Context Dependence and Reliability in Autoregressive Language Models.}

\author{
Poushali Sengupta \\
Department of Informatics, University of Oslo \\
Oslo, Norway \\
\texttt{poushals@ifi.uio.no}
\and
Shashi Raj Pandey \\
Department of Electronic Systems, Aalborg University \\
Aalborg, Denmark \\
\texttt{srp@es.aau.dk}
\and
Sabita Maharjan \\
Department of Informatics, University of Oslo \\
Oslo, Norway \\
\texttt{sabita@ifi.uio.no}
\and
Frank Eliassen \\
Department of Informatics, University of Oslo \\
Oslo, Norway \\
\texttt{frank@ifi.uio.no}
}

\begin{document}
\maketitle
\begin{abstract}
Large language models (LLMs) generate outputs by conditioning on extensive context, often containing redundant instructions, repeated facts, retrieved passages, and interaction history. In critical applications, it is essential to identify which context elements, prompts, retrievals, prior interactions, and memory, are truly used to generate an output, and how the model’s reasoning mediates their influence. This is key to building trustworthy AI systems. However, standard explanation methods, such as attention, gradients, or perturbation, struggle with redundancy, especially when memory introduces overlapping or outdated information. Even minor input changes, such as rewording a past instruction or reordering retrieved content, can cause attribution scores to shift unpredictably without changing the output, undermining interpretability in memory- and history-rich settings. Such sensitivity not only weakens explanation reliability but also raises concerns for emerging risks like prompt injection, where subtle context manipulation can hijack model behavior under the appearance of legitimate prior content. In our research, we investigate a key issue related to the trustworthiness of autoregressive LLMs: the ability to differentiate between context elements that are essential for output generation and those that are merely associated with it. We propose that dependence awareness, explicitly accounting for how each context element contributes relative to others, is critical for generating reliable explanations in LLMs. To implement this, we introduce RISE (Redundancy-Insensitive Scoring of Explanation), which quantifies the unique influence of each input element conditioned on the rest of the context. This design naturally suppresses redundant contributions and yields clearer attributions. Experiments on redundant prompting and retrieval-augmented generation (RAG) show that RISE produces significantly more stable and robust explanations than traditional perturbation-based methods. These findings highlight the role of conditional information as a principled and efficient foundation for trustworthy explanation and for monitoring LLM behavior during inference.
\end{abstract}

\section{Introduction}
LLMs are powerful tools for understanding and generating human language. They are increasingly used in important areas such as medical decision-making, legal advice, and automated systems that assist in various tasks \cite{bommasani2021opportunities, openai2023gpt4}. At their core, LLMs operate by predicting the next word (or token) based on an expanding context, including system instructions, user questions, previous conversations, and additional resources \cite{brown2020language}. As these models become more complex and are given more autonomy, it's crucial to ensure that they can be understood and trusted \cite{doshi2017accountability}. Clarifying these challenges helps the audience feel their expertise is vital in addressing trust issues. However, explaining how these models arrive at their conclusions is challenging. Current methods, such as visualizing which words the model attends to or analyzing changes in output with input adjustments, often yield inconsistent results \cite{wiegreffe2019attention, sundararajan2017axiomatic}. One reason for this is that LLMs work with long contexts that often repeat similar information. For example, instructions can be restated, documents can have similar meanings, and previous discussions can be reiterated \cite{liu2023lost}. Changes in how input is structured can lead to very different explanations, even if the overall response remains the same. This inconsistency can undermine trust and complicate the process of monitoring and evaluating the model’s performance \cite{jacovi2021formalizing}.A simple real-life analogy is getting directions from several people, where some repeat the same advice and others add a conflicting suggestion. Imagine someone asks how to reach a café:
\begin{itemize}
\item \textbf{Person A} says: `Walk straight for two blocks, then turn left.''
    \item \textbf{Person B} repeats the same advice: `Yes-straight ahead, then left.''
\item \textbf{Person C} says something different: `Go straight and take the second right.''
\end{itemize}
The person still turns left, following the information in A (and redundantly in B). Now suppose someone later asks, `Which advice caused the left turn?'' If an explanation claims that \emph{A, B, and C all influenced the decision equally}, it is clearly wrong. B adds no new information beyond A, and C is contradictory. The problem is therefore not redundancy itself, but \textbf{misassigned credit}: it can create the false impression that multiple independent sources were necessary, and it may even imply that incorrect advice contributed to a correct action. Worse, if C's instruction is written down and repeated later (e.g., as `memory''), an auditor might mistakenly conclude that `turn right'' was part of the reasoning-or that it was safe to follow-simply because the explanation system treated it as influential. In contrast, a \emph{redundancy-insensitive} explanation would behave differently. Since B does not add any new information beyond A, it should receive near-zero attribution, and the explanation should primarily credit A for the left turn. Moreover, \textbf{C should receive (approximately) zero attribution for a different reason}: it does not help explain the left turn \emph{given the rest of the context}. Once A (and B) already specify ``turn left,'' C's conflicting instruction provides no additional predictive information about the observed action; conditioning on A, the decision is already determined, so C has no unique contribution. In other words, C is not merely redundant-it is \emph{irrelevant (or negatively aligned)} with the decision, and a dependence-aware explainer should therefore suppress it rather than reward its presence.

This distinction matters in practice. First, it avoids a false ``many sources agreed'' illusion: crediting repeated statements can incorrectly suggest robustness, even though the information was merely duplicated~\cite{jacovi2020towards}. Second, it improves stability: redundancy-sensitive attributions can shift under minor paraphrases or reordering despite an unchanged output, whereas dependence-aware attributions remain consistent~\cite{kindermans2019reliability}. Third, it reduces adversarial risk: if malicious instructions are injected into retrieved context or memory and then repeated, redundancy-blind explainers may over-credit them, lending unwarranted legitimacy to manipulated behavior~\cite{wei2023jailbroken}. In short, suppressing redundancy does not remove essential information; it prevents importance from scaling with repetition, yielding explanations that reflect \emph{unique reliance} rather than frequency. More broadly, when multiple context elements convey overlapping information, attribution methods that distribute credit across inputs can \emph{systematically overvalue} redundant content. This can \textbf{mislead} users about how the model operates, creating a false impression of reliability or control~\cite{miller2019explanation}. In practice, this degrades auditability (the true decision pathway is obscured), misplaces trust (explanations appear stable while being fragile), and increases risk under prompt-injection or memory-poisoning scenarios (irrelevant content is assigned non-trivial influence)~\cite{li2023prompt}.
\begin{wraptable}{l}{0.40\linewidth}
\centering
\caption{
Comparison of explanation approaches for autoregressive LLMs.
}
\scriptsize
\begin{adjustbox}{width=\linewidth}
\begin{tabular}{
p{0.22\linewidth}
p{0.12\linewidth}
p{0.16\linewidth}
p{0.14\linewidth}
p{0.16\linewidth}
}
\toprule
\textbf{Method} 
& \textbf{Red.-aware} 
& \textbf{Stable} 
& \textbf{Context-level} 
& \textbf{Inference-time} \\
\midrule
Attention viz. 
& $\times$ & $\times$ & $\times$ & $\checkmark$ \\
Gradient saliency 
& $\times$ & $\times$ & $\times$ & $\checkmark$ \\
Perturbation 
& $\times$ & $\times$ & $\checkmark$ & $\times$ \\
Mechanistic probing 
& $\checkmark$ & $\checkmark$ & $\times$ & $\times$ \\
\midrule
\textbf{RISE (ours)} 
& $\checkmark$ & $\checkmark$ & $\checkmark$ & $\checkmark$ \\
\bottomrule
\end{tabular}
\end{adjustbox}
\label{tab:comparison}
\vspace{-1.0em}
\end{wraptable}
Large Language Models (LLMs) face significant challenges in providing clear explanations of their decision-making processes. One major issue is the difficulty of distinguishing between what is genuinely essential to a given outcome and what is merely present alongside it. In many cases, prompts, retrieved passages, and memorized information may repeat or rephrase the same facts. If an explanation method assigns equal importance to every instance of repetition, it can inaccurately imply that the model utilized a variety of independent signals or, even worse, that it based its conclusions on irrelevant or incorrect information. This can lead to three specific failure modes: 1. Over-crediting Distractors: This occurs when the model is praised for dependence on information that it did not genuinely rely on, obscuring the true sources of its output. 2. Instability: Small changes, such as paraphrasing or rearranging the order of information, can lead to significant shifts in attribution without any real change in the model's output. 3. Susceptibility to Manipulation: This is particularly relevant in scenarios such as prompt injection, where harmful instructions inserted into context or memory can unduly influence the model’s output because the explanation method used does not differentiate between actual dependencies and mere proximity of information. Current methodologies, such as attention-based scores \cite{jain2019attention,serrano2019attention}, gradient attributions \cite{sundararajan2017axiomatic,simonyan2014deep}, and simple ablation or perturbation techniques \cite{li2016understanding,feng2018pathologies}, struggle to cope with these issues. They often focus too much on the surface form of the data and do not adequately consider semantic relationships or the conditional dependencies among different pieces of contextual information. To tackle the challenge of providing reliable explanations for autoregressive language models, we propose a structured approach that prioritizes what we call``dependence awareness." This principle serves as a foundation for generating trustworthy attributions in autoregressive systems. Our contribution is a new explanation method called \textbf{RISE (Redundancy-Insensitive Scoring of Explanation)}. This method assesses the unique contribution of each context element while accounting for the influence of other elements. By diminishing the influence of overlapping or irrelevant inputs, RISE aims to deliver clearer and more accurate explanations. This, in turn, enables users and evaluators to discern the model's true dependencies, enhancing its resistance to misinterpretation and potential manipulation.

The central idea of this work is that explanation stability in autoregressive language models should be evaluated through the lens of \textit{conditional dependence}: a context element should receive attribution only to the extent that it provides information about the next-token distribution that is \emph{not} recoverable from the remaining context. Using controlled variations of prompts and retrieved context, we show that our formulation reduces redundancy-driven inflation of importance and exposes limitations of existing stability-oriented heuristics. Moreover, our approach supports structured context units, such as prompt segments, retrieved passages, or dialogue turns, which enables practical, real-time monitoring and auditing of context reliance during inference. Our main contributions in this work are:
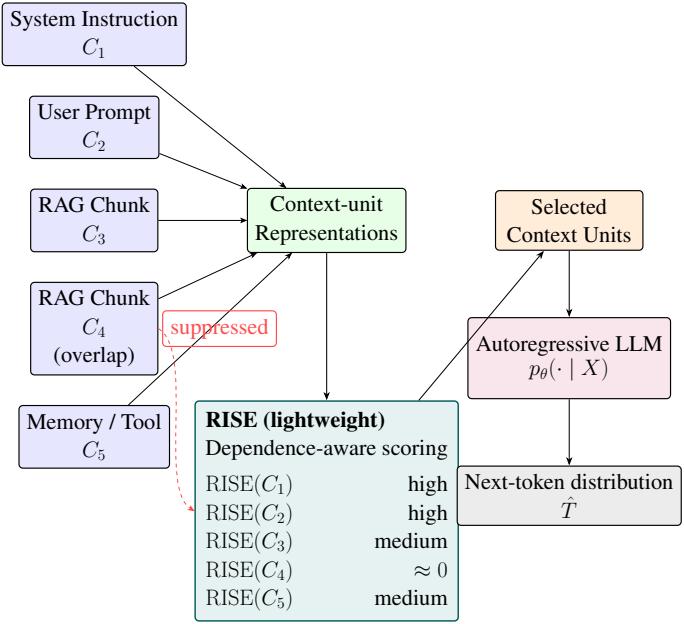
\begin{wrapfigure}{l}{0.58\linewidth}
\centering
\begin{adjustbox}{width=\linewidth}

\begin{tikzpicture}[
    font=\LARGE,
    >=Stealth,
    node distance=8mm and 14mm,
    every node/.style={
        draw,
        rounded corners=3pt,
        line width=0.9pt,
        inner sep=6pt,
        align=center
    },
    ctx/.style={fill=blue!10},
    repr/.style={fill=green!10},
    sel/.style={fill=orange!15},
    llm/.style={fill=purple!10, minimum height=22mm},
    outnode/.style={fill=gray!15},
    RISE/.style={
        draw=teal!70!black,
        fill=teal!10,
        line width=1.1pt,
        inner sep=8pt,
        minimum height=48mm,
        minimum width=48mm,
        align=left
    }
]

\node[ctx] (c1) {System Instruction\\$C_1$};
\node[ctx, below=of c1] (c2) {User Prompt\\$C_2$};
\node[ctx, below=of c2] (c3) {RAG Chunk\\$C_3$};
\node[ctx, below=of c3] (c4) {RAG Chunk\\$C_4$\\(overlap)};
\node[ctx, below=of c4] (c5) {Memory / Tool\\$C_5$};

\node[repr, right=24mm of c3, minimum width=38mm] (repr)
{Context-unit\\Representations};

\node[RISE, below=40mm of repr] (RISE)
{\textbf{RISE (lightweight)}\\
Dependence-aware scoring\\[2mm]
$\mathrm{RISE}(C_1)$ \hfill high\\
$\mathrm{RISE}(C_2)$ \hfill high\\
$\mathrm{RISE}(C_3)$ \hfill medium\\
$\mathrm{RISE}(C_4)$ \hfill $\approx 0$\\
$\mathrm{RISE}(C_5)$ \hfill medium};

\node[sel, right=24mm of repr, minimum width=40mm] (sel)
{Selected\\Context Units};

\node[llm, below=18mm of sel] (llm)
{Autoregressive LLM\\[-1mm]
$p_\theta(\cdot \mid X)$};

\node[outnode, below=18mm of llm, minimum width=36mm] (out)
{Next-token distribution\\$\hat{T}$};

\draw[->] (c1) -- (repr);
\draw[->] (c2) -- (repr);
\draw[->] (c3) -- (repr);
\draw[->] (c4) -- (repr);
\draw[->] (c5) -- (repr);

\draw[->, thick] (repr) -- (RISE);
\draw[->, thick] (RISE) -- (sel);
\draw[->, thick] (sel) -- (llm);
\draw[->, thick] (llm) -- (out);

\draw[->, dashed, red!70]
    (c4.east) .. controls +(10mm,-6mm) and +(-10mm,6mm) .. (RISE.west);

\node[font=\LARGE, red!70, right=1mm of c4.east]
{suppressed};

\end{tikzpicture}
\end{adjustbox}
\caption{
Lightweight dependence-aware context design using \textbf{RISE}.
Structured context units are mapped to compact representations, scored by their unique contribution conditioned on the rest of the context, and redundant units (e.g., overlapping RAG chunks) are suppressed before attribution and generation.}
\label{fig:overview}
\end{wrapfigure}
\begin{itemize}
    \item We identify \textit{context redundancy} as a key failure mode in explaining autoregressive LLMs: when overlapping inputs are not accounted for, attribution can appear plausible and even stable, while still assigning credit to redundant or irrelevant context, thereby obscuring the model's true dependencies.
    \item We propose \textit{conditional information} as a principled foundation for reliable attribution in autoregressive models, and introduce \textbf{RISE} as a dependence-aware scoring approach for evaluating the importance of structured context units (e.g., prompt segments, retrieved passages, and dialogue turns).
    \item Through controlled prompt and retrieval experiments, we demonstrate that dependence-aware attribution yields clearer and more meaningful explanations by reducing redundancy-driven over-crediting, and provides actionable signals for monitoring, debugging, and auditing LLM behavior, including under prompt injection-like context manipulations.
\end{itemize}
We intentionally focus the main manuscript on the core dependence-aware principle and its empirical consequences, while deferring auxiliary diagnostics, stress tests, and per-variant analyses to the appendix to preserve a clear workshop narrative.

\section{Background, Related Work, and Problem Gap}

The preceding section highlighted that explanation instability in autoregressive LLMs largely arises from unmodeled dependence and redundancy in long contexts. We now place this observation in the context of existing work on LLM explainability and formally articulate the remaining gap that motivates our approach. Autoregressive language models generate text according to,
\begin{equation}
    \medmath{p(x_1, \ldots, x_T) = \prod_{t=1}^{T} p(x_t \mid x_{<t}),}
\vspace{-2mm}
\end{equation}
where each token is conditioned on all previous tokens \cite{brown2020language}. In modern deployments, the conditioning context $x_{<t}$ is not a flat sequence but a structured composition of heterogeneous elements, including system instructions, user prompts, retrieved documents in retrieval-augmented generation (RAG), dialogue history, intermediate reasoning steps, tool outputs, and agent memory \cite{bommasani2021opportunities, lewis2020retrieval}. We denote this structured context as a set of context units, $\medmath{\mathcal{C} = \{C_1, C_2, \ldots, C_m\},}$ where each $C_i$ represents a semantically meaningful unit. This abstraction reflects how prompts are constructed and consumed in practice, and it provides a natural level at which explanations can be audited, compared, and acted upon during deployment. Research is increasingly focused on improving the interpretability of large language models (LLMs) \cite{doshi2017accountability, miller2019explanation}. Current methodologies for explaining LLMs fall into four main categories, including attention-based explanations, which visualize attention weights but can be misleading, as attention alone does not capture all influences among context elements \cite{wiegreffe2019attention}. Gradient-based methods measure output sensitivity to input changes,  yet they can be affected by model reparameterization and struggle to isolate contributions from interrelated inputs \cite{sundararajan2017axiomatic}. Perturbation-based approaches assess token importance by observing output changes after masking or removing tokens, which can lead to overestimating significance in lengthy contexts \cite{hooker2021explaining}. Mechanistic analyses explore internal model representations but are often computationally intensive and impractical for real-time use \cite{olah2018building}. To address these challenges, we propose a new method for evaluating context contributions while explicitly accounting for dependencies among context elements, enabling more accurate and efficient explanations. Our approach includes \textbf{RISE}, a lightweight, model-agnostic explanation tool for structured context units, along with an optional learned selector that incorporates dependence awareness within the model.

\section{Dependence-Aware Explanation Design via RISE}
\label{sec:rise_design}
Let $\mathcal{C}=\{C_1,\dots,C_m\}$ denote a set of \emph{structured context units} supplied to an autoregressive LLM. Each unit $C_i$ corresponds to a semantically coherent component of the prompt, such as a system instruction block, a user query, an in-context example, a retrieved document chunk in retrieval-augmented generation (RAG)~\cite{lewis2020retrieval}, a dialogue turn, a tool output, or an agent memory entry. Let $X$ denote the concatenated prompt obtained by serializing $\mathcal{C}$ according to a fixed template. This abstraction reflects how LLM contexts are constructed and manipulated in practice~\cite{brown2020language}, and it provides a natural level at which explanations can be consumed, audited, and acted upon during deployment. We consider multiple \emph{explanation targets}. The primary target is the next-token distribution $\hat{T}\triangleq p_\theta(\cdot \mid X)$. When needed, the same formulation applies to (i) a sampled next token $\hat{t}\sim p_\theta(\cdot \mid X)$, and (ii) a generated span $\hat{S}=(\hat{t}_1,\ldots,\hat{t}_H)$, which is particularly relevant for longer-horizon monitoring in agentic systems~\cite{yao2023react}.%
\footnote{If span-level generation is produced by iterative sampling, $\hat{S}$ can be treated as the joint random variable induced by the autoregressive rollout under the fixed context $X$.} Our objective is to quantify, for each context unit $C_i$, how much \emph{unique} information it contributes to the generation outcome once the remaining context $\mathcal{C}_{\setminus i}$ is taken into account. We define the unnormalized dependence-aware contribution of $C_i$ as its \textbf{Conditional Unique Dependence (CUD)} with the explanation target (formally, conditional mutual information):
\begin{equation}
\label{eq:cud}
\medmath{\Delta_i \triangleq \mathrm{CUD}(C_i;\hat{T}\mid \mathcal{C}_{\setminus i})
\;\triangleq\; I(C_i;\hat{T}\mid \mathcal{C}_{\setminus i}).}
\end{equation}
This quantity measures how much information $C_i$ provides about the generation that cannot be recovered from the remaining context~\cite{cover2006elements}. To enable comparison across context units, we normalize these contributions into our \textbf{RISE score} (Redundancy-Insensitive Scoring of Explanation):
\begin{equation}
\label{eq:rise}
\medmath{\mathrm{RISE}(C_i)
=
\frac{\mathrm{CUD}(C_i; \hat{T} \mid \mathcal{C}_{\setminus i})}
{\sum_{j=1}^m \mathrm{CUD}(C_j; \hat{T} \mid \mathcal{C}_{\setminus j}) + \varepsilon},}
\end{equation}
where $\varepsilon>0$ is a small constant used only to avoid division by zero in degenerate cases (e.g., when all CUDs are numerically estimated as zero). In practice, we set $\varepsilon$ to a negligible value (e.g., $10^{-12}$) and explicitly report when the denominator is close to zero.

For a generated span $\hat{S}$, we analogously define $\Delta_i^{(S)} \triangleq \mathrm{CUD}(C_i;\hat{S}\mid \mathcal{C}_{\setminus i})$ and normalize as in Eq.~\eqref{eq:rise}. This enables sentence-level or tool-call-level explanation and monitoring. High $\mathrm{RISE}(C_i)$ values indicate that a context unit provides information about the generation outcome that is not recoverable from the rest of the prompt, whereas low values indicate redundancy with $\mathcal{C}_{\setminus i}$. Importantly, RISE does not aim to explain internal neurons, attention heads, or mechanistic causal pathways. Instead, it captures \emph{model reliance under conditional dependence} at the level of structured context units. This perspective is well suited for trustworthy deployment: it yields explanations that are stable under prompt redundancy, directly interpretable by practitioners, and actionable for inference-time monitoring, auditing, and intervention.

\subsection{Theoretical Properties}
\label{sec:theory}

We now formalize key theoretical properties that justify \textbf{RISE} as a principled, dependence-aware explanation design for autoregressive language models operating on long and redundant contexts. Throughout, we assume that all random variables are defined on a common probability space and that conditional mutual information is well defined. Let $\mathcal{C}=\{C_1,\ldots,C_m\}$ denote the set of structured context units and let $\hat{T}$ denote the explanation target (e.g., next-token distribution or generated token). For each unit $C_i$, define its conditional contribution
$\Delta_i \triangleq I(C_i;\hat{T}\mid \mathcal{C}_{\setminus i})$, and its normalized attribution score,
\vspace{-2mm}
\begin{equation}
\medmath{\mathrm{RISE}(C_i)
=
\frac{\Delta_i}{\sum_{j=1}^m \Delta_j + \varepsilon}}.
\vspace{-2mm}
\end{equation}
If a context unit adds no new information beyond what is already present, a dependence-aware explanation should assign it zero importance.
\begin{lemma}[Redundancy suppression under conditional independence]
\label{lem:cind}
If a context unit $\medmath{C_i}$ satisfies
$\medmath{C_i \perp\!\!\!\perp \hat{T} \mid \mathcal{C}_{\setminus i},}$
then
$\medmath{\Delta_i = 0}$
and hence
$\medmath{\mathrm{RISE}(C_i) = 0}$
for $\medmath{\varepsilon = 0}$ and $\medmath{\sum_j \Delta_j > 0}$.
\end{lemma}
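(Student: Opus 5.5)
The argument has two steps: show that the conditional mutual information vanishes, then read off the normalized score.

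\textbf{Step 1: $\Delta_i = 0$.} I will use the standard representation of conditional mutual information as an expected Kullback--Leibler divergence:
\begin{equation*}
\Delta_i = I(C_i;\hat{T}\mid \mathcal{C}_{\setminus i}) = \mathbb{E}_{\mathcal{C}_{\setminus i}}\Bigl[ D_{\mathrm{KL}}\bigl(P_{C_i,\hat{T}\mid \mathcal{C}_{\setminus i}} \,\|\, P_{C_i\mid \mathcal{C}_{\setminus i}}\otimes P_{\hat{T}\mid \mathcal{C}_{\setminus i}}\bigr)\Bigr].
\end{equation*}
In the general (non-discrete) case this is taken with respect to regular conditional distributions, whose existence I take as part of the standing assumption that conditional mutual information is well defined.

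The hypothesis $C_i \perp\!\!\!\perp \hat{T} \mid \mathcal{C}_{\setminus i}$ says that, for almost every realization of $\mathcal{C}_{\setminus i}$, the conditional joint law of $(C_i,\hat{T})$ equals the product of its conditional marginals. Hence the KL divergence inside the expectation is zero almost surely, and so $\Delta_i = 0$.

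For a discrete target, one can equivalently observe that the pointwise log-ratio $\log \frac{p(c_i,t\mid c_{\setminus i})}{p(c_i\mid c_{\setminus i})\,p(t\mid c_{\setminus i})}$ vanishes wherever the density is positive. Alternatively, write $\Delta_i = H(\hat{T}\mid \mathcal{C}_{\setminus i}) - H(\hat{T}\mid \mathcal{C})$ and note that conditional independence makes the two conditional entropies equal.

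\textbf{Step 2: $\mathrm{RISE}(C_i) = 0$.} Set $\varepsilon = 0$. The assumption $\sum_j \Delta_j > 0$ means the denominator is strictly positive, so $\mathrm{RISE}(C_i) = \Delta_i / \sum_j \Delta_j$ is well defined. With $\Delta_i = 0$ from Step 1, the ratio is $0$.

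Nonnegativity of each $\Delta_j$, from Gibbs' inequality, is not needed for this conclusion. It only confirms that the denominator condition is the natural non-degeneracy requirement.

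\textbf{Main obstacle.} The only nontrivial point is measure-theoretic. Conditional independence holds only almost surely, so the KL representation must be applied through regular conditional distributions, and null sets must be shown not to contribute to the expectation. I will handle this by stating the result for discrete (finite-vocabulary) targets, which is the case for $\hat{T}$ over tokens, and remarking that the general case follows identically.
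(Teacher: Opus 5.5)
Your proof is correct and follows the paper's route. The paper proves this lemma by citing the standard equivalence $I(\hat{T};C_i\mid\mathcal{C}_{\setminus i})=0 \Leftrightarrow C_i\perp\!\!\!\perp\hat{T}\mid\mathcal{C}_{\setminus i}$, supported by the same kind of expected-KL representation it uses in its non-negativity lemma, and then divides by the positive denominator exactly as in your Step 2. You do not need to restrict to discrete targets, which would not cover the paper's primary target $\hat{T}=p_\theta(\cdot\mid X)$ since that is simplex-valued rather than a token: your KL argument through regular conditional distributions already works in general, because an integrand that vanishes almost surely has zero expectation.
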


\begin{wraptable}{r}{0.48\linewidth}
\centering
\caption{Conceptual comparison between marginal attribution methods and \textbf{RISE} for autoregressive LLM contexts.}
\scriptsize
\begin{adjustbox}{width=\linewidth}
\begin{tabular}{lcc}
\toprule
\textbf{Aspect} & \textbf{Marginal Attribution} & \textbf{RISE (Ours)} \\
\midrule
Handles redundancy & No & Yes (by conditioning) \\
Stability under prompt variation & Low & High \\
Context-unit explanations & Limited & Native \\
Inference-time monitoring & Difficult & Direct \\
\bottomrule
\end{tabular}
\end{adjustbox}
\label{tab:rise_vs_marginal}
\end{wraptable}

A context element that does not provide unique information about the model output, once all other context is known, should receive zero attribution. Duplicating the same instruction or context should not increase its importance, as no additional information is introduced.
\begin{corollary}[Duplicate-context invariance]
\label{cor:duplicate}
Suppose two context units $C_i$ and $C_i'$ are exact duplicates, i.e.,
$\medmath{C_i = C_i'}$ almost surely.
Then, in the ideal population setting,
$\medmath{I(C_i;\hat{T}\mid \mathcal{C}_{\setminus i}) = 0}$
and
$\medmath{I(C_i';\hat{T}\mid \mathcal{C}_{\setminus i'}) = 0,}$
and both units receive zero \textbf{RISE} attribution.
\end{corollary}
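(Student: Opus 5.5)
The plan is to reduce the corollary to Lemma~\ref{lem:cind} by checking its hypothesis for each of the two duplicates. Because $C_i=C_i'$ almost surely, each copy is a deterministic (indeed identity) function of the other. The key point is that $C_i'$ belongs to $\mathcal{C}_{\setminus i}$ and $C_i$ belongs to $\mathcal{C}_{\setminus i'}$. Hence, once the remaining context is fixed, $C_i$ is almost surely determined, and symmetrically for $C_i'$.

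\textbf{Step 1 (conditional entropy vanishes).} Since $C_i$ is a measurable function of $\mathcal{C}_{\setminus i}$ up to a null set, the conditional law of $C_i$ given $\mathcal{C}_{\setminus i}$ is a point mass almost surely. In the discrete case this reads $H(C_i\mid\mathcal{C}_{\setminus i})=0$.

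\textbf{Step 2 (bound the conditional mutual information).} I will use the chain rule / nonnegativity bound
\[
0\le I(C_i;\hat{T}\mid\mathcal{C}_{\setminus i}) = H(C_i\mid\mathcal{C}_{\setminus i}) - H(C_i\mid \hat{T},\mathcal{C}_{\setminus i}) \le H(C_i\mid\mathcal{C}_{\setminus i}) = 0 .
\]
For general (non-discrete) variables, I will argue at the level of measures. A variable that is degenerate given $\mathcal{C}_{\setminus i}$ is conditionally independent of anything, in particular of $\hat{T}$, given $\mathcal{C}_{\setminus i}$. Concretely, $P(C_i\in A,\hat T\in B\mid \mathcal{C}_{\setminus i}) = \mathbf{1}\{C_i'\in A\}\,P(\hat T\in B\mid\mathcal{C}_{\setminus i})$ almost surely. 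This gives $C_i\perp\!\!\!\perp\hat{T}\mid\mathcal{C}_{\setminus i}$, and Lemma~\ref{lem:cind} then yields $\Delta_i=0$. Swapping the roles of $C_i$ and $C_i'$ gives $\Delta_{i'}=0$ by the identical argument.

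\textbf{Step 3 (translate to RISE).} With both numerators zero, $\mathrm{RISE}(C_i)=\mathrm{RISE}(C_i')=0$ whenever the denominator is positive. That means either $\varepsilon>0$, or $\varepsilon=0$ with $\sum_j\Delta_j>0$ as in the lemma. In the fully degenerate case where every $\Delta_j=0$ and $\varepsilon>0$, the score is still $0/\varepsilon=0$. So the conclusion holds in all cases except the undefined $0/0$ case, which I will note explicitly.

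The main obstacle is Step 2 in the general measure-theoretic setting. There the entropy argument is unavailable, so conditional mutual information must be handled via the Radon--Nikodym/KL definition and the regular conditional distribution of $(C_i,\hat T)$. I would first verify the factorization of the conditional joint law displayed above. That factorization makes the conditional KL divergence between the joint law and the product of the marginals identically zero.
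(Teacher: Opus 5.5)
Your proposal is correct and follows the route the paper intends. The paper presents the corollary as an immediate consequence of Lemma~\ref{lem:cind}, with exactly your key observation: $C_i'\in\mathcal{C}_{\setminus i}$ and symmetrically $C_i\in\mathcal{C}_{\setminus i'}$, so each copy is almost surely determined by its conditioning set and hence conditionally independent of $\hat{T}$. The paper gives no separate written proof, so your discrete entropy bound, the measure-theoretic factorization, and the explicit handling of the denominator (including the $\varepsilon>0$ and $0/0$ cases) simply fill in details the paper leaves implicit.
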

Repeating the same prompt or instruction does not make it more important; \textbf{RISE} correctly ignores duplicated context. Adding redundant context should not change the importance assigned to existing, informative context units.
\begin{proposition}[Stability under redundancy injection]
\label{prop:stability}
Let $\medmath{\mathcal{C}' = \mathcal{C} \cup \{C'\}}$, where $C'$ is redundant given $\medmath{\mathcal{C}}$, i.e.,
$\medmath{C' \perp\!\!\!\perp \hat{T} \mid \mathcal{C}.}$
Then, for all original units $C_i \in \mathcal{C}$,
$\medmath{\mathrm{RISE}_{\mathcal{C}'}(C_i) = \mathrm{RISE}_{\mathcal{C}}(C_i),}$
in the ideal population setting.
\end{proposition}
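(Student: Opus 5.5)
The plan is to compare, term by term, the numerators and denominators of $\mathrm{RISE}_{\mathcal{C}'}$ and $\mathrm{RISE}_{\mathcal{C}}$ using the chain rule for conditional mutual information. Write $\Delta_i$ for the CUD of $C_i$ computed in $\mathcal{C}$ and $\Delta_i'$ for the CUD computed in $\mathcal{C}'$. There are three steps.

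\textbf{Step 1: the injected unit scores zero.} In $\mathcal{C}'$ the injected unit is conditioned on $\mathcal{C}'_{\setminus C'}=\mathcal{C}$. Its contribution is therefore $I(C';\hat{T}\mid\mathcal{C})$, which is zero by the hypothesis $C'\perp\!\!\!\perp\hat{T}\mid\mathcal{C}$ and Lemma~\ref{lem:cind}.

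\textbf{Step 2: compare $\Delta_i'$ with $\Delta_i$.} For an original unit $C_i$ we have $\Delta_i' = I(C_i;\hat{T}\mid \mathcal{C}_{\setminus i},C')$. I would expand the joint quantity $I(C_i,C';\hat{T}\mid\mathcal{C}_{\setminus i})$ in the two possible orders:
\begin{align*}
I(C_i,C';\hat{T}\mid\mathcal{C}_{\setminus i}) &= I(C_i;\hat{T}\mid\mathcal{C}_{\setminus i}) + I(C';\hat{T}\mid\mathcal{C}) = \Delta_i + 0,\\
I(C_i,C';\hat{T}\mid\mathcal{C}_{\setminus i}) &= I(C';\hat{T}\mid\mathcal{C}_{\setminus i}) + \Delta_i'.
\end{align*}
Equating the two expansions gives
\[
\Delta_i' = \Delta_i - I(C';\hat{T}\mid\mathcal{C}_{\setminus i}).
\]

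\textbf{Step 3: conclude.} If each correction term $I(C';\hat{T}\mid\mathcal{C}_{\setminus i})$ vanishes, then $\Delta_i'=\Delta_i$ for every $i$. Together with Step 1, the numerators agree and the denominators $\sum_j\Delta_j'+0$ and $\sum_j\Delta_j$ agree, so the normalized scores coincide (for $\varepsilon=0$ and a positive denominator, or for any fixed $\varepsilon$).

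\textbf{Main obstacle.} The difficulty is Step 2: the stated hypothesis $C'\perp\!\!\!\perp\hat{T}\mid\mathcal{C}$ does not by itself force $I(C';\hat{T}\mid\mathcal{C}_{\setminus i})=0$. For example, if $C'$ is an exact copy of an informative $C_i$, then $C'$ is redundant given $\mathcal{C}$. Yet $I(C';\hat{T}\mid\mathcal{C}_{\setminus i})=\Delta_i>0$, so $\Delta_i'=0$, in line with Corollary~\ref{cor:duplicate}, and the RISE score of $C_i$ changes.

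I would therefore make precise the notion of redundancy that the argument actually uses. Concretely, I would read ``redundant given $\mathcal{C}$'' as the leave-one-out condition
\[
C'\perp\!\!\!\perp\hat{T}\mid\mathcal{C}_{\setminus i}\quad\text{for every } i,
\]
which holds, for instance, when $C'$ is a function of units that are themselves jointly uninformative, or is pure noise. Under this condition the identity above gives exact invariance. I would also add a remark that when $C'$ duplicates an informative unit, the original unit's credit is redistributed; this is the duplicate phenomenon already isolated in Corollary~\ref{cor:duplicate}.
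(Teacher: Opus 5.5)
The paper gives no proof of this proposition. The appendix lists it among the main-text guarantees but proves only non-negativity, normalization, group attribution and the selector results, so there is no argument of the paper's to compare yours with. Your analysis is correct, and it shows that the statement is false under its stated hypothesis. The identity $\Delta_i'=\Delta_i-I(C';\hat{T}\mid\mathcal{C}_{\setminus i})$ from your two chain-rule expansions is right. Your counterexample is exactly the setting of the paper's own Corollary~\ref{cor:duplicate}: if $C'$ copies a unit with $\Delta_i>0$, the corollary itself gives $\mathrm{RISE}_{\mathcal{C}'}(C_i)=0\neq\mathrm{RISE}_{\mathcal{C}}(C_i)$. So the corollary and the proposition cannot both hold as written.

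Two refinements of your repair.

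First, state the strengthened hypothesis as the \emph{conjunction} of $C'\perp\!\!\!\perp\hat{T}\mid\mathcal{C}$ with the leave-one-out conditions, not as a replacement. Step~1 and your first expansion use $I(C';\hat{T}\mid\mathcal{C})=0$, and the leave-one-out conditions do not imply it. For example, take $\mathcal{C}=\{C_1\}$, let $C_1,C'$ be independent fair bits, and set $\hat{T}=C_1\oplus C'$. Then $I(C';\hat{T})=0$, yet $\Delta_1=0$ becomes $\Delta_1'=\log 2$ after injection.

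Second, your identity already yields $0\le I(C';\hat{T}\mid\mathcal{C}_{\setminus i})\le\Delta_i$ under the original hypothesis, so the extra condition holds automatically wherever $\Delta_i=0$. In particular, injecting any function of units with zero CUD leaves every score unchanged. This is a cleaner form of your ``jointly uninformative'' example.

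Conversely, for fixed $\varepsilon>0$ your condition is also necessary. Summing $\mathrm{RISE}_{\mathcal{C}'}(C_i)=\mathrm{RISE}_{\mathcal{C}}(C_i)$ over $i$ forces $\sum_j\Delta_j'=\sum_j\Delta_j$, because $x\mapsto x/(x+\varepsilon)$ is strictly increasing. Hence $\Delta_i'=\Delta_i$ for all $i$. With $\varepsilon=0$, only a common shrinkage $\Delta_i'=\lambda\Delta_i$ is needed.

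So for $\varepsilon>0$ the corrected proposition is a characterization: RISE scores are invariant exactly when $C'$ is screened off from $\hat{T}$ by every leave-one-out context $\mathcal{C}_{\setminus i}$ of a positively credited unit.
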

\begin{wraptable}{r}{0.48\linewidth}
\centering
\scriptsize
\caption{Deployment-oriented benefits of lightweight dependence-aware context selection (mandatory recent window + unique anchors).}
\begin{adjustbox}{width=\linewidth}
\begin{tabular}{p{0.28\linewidth}p{0.66\linewidth}}
\toprule
\textbf{Design outcome} & \textbf{What it enables in deployment} \\
\midrule
Reduced context size ($L+K \ll m$) 
& Lower latency and reduced variance by eliminating redundant context units while preserving essential information. \\
Dependence-aware anchors 
& Retention of long-range or external information without importance splitting caused by redundancy. \\
Stable explanations 
& Robust attributions under redundancy injection, paraphrasing, and minor context shifts. \\
Monitoring signal 
& Drift in selected anchor sets can indicate retrieval changes, prompt path deviations, or distribution shift at inference time. \\
\bottomrule
\end{tabular}
\end{adjustbox}
\label{tab:light_benefits}
\end{wraptable}
Injecting redundant context does not distort explanations for the original inputs; \textbf{RISE} preserves attribution stability by design. Additional properties, extensions, and stress-test results are deferred to Appendix.
\textbf{RISE} provides dependence-aware \emph{attribution}, but the same conditional-information principle can also be used to design a \emph{lightweight context selector} that reduces the effective input dimension of an autoregressive system. The key observation is that autoregressive contexts are often highly redundant: in time series, many historical lags encode overlapping seasonal/inertial information; in LLMs, prompt segments and retrieved chunks frequently paraphrase or overlap semantically. Feeding all context to the model can dilute fast-changing signals, increase variance, and destabilize explanations. Instead of truncating context or selecting units by marginal importance, we select a small subset that maximizes \emph{unique} information after conditioning on a mandatory ``recent'' context. This yields a compact context that preserves responsiveness to recent changes while retaining a few long-range anchors (e.g., weekly periodicity lags or key retrieved evidence).

Let $\medmath{\mathcal{C}=\{C_1,\ldots,C_m\}}$ be ordered context units, where the ordering corresponds to recency (time-series lags) or prompt position (LLM serialization). We define: $\medmath{\mathcal{R} \triangleq \{C_1,\ldots,C_L\}}$ \text{(mandatory recent window)},
$\medmath{\mathcal{H} \triangleq \{C_{L+1},\ldots,C_m\}}$ (candidates). The lightweight selector outputs a reduced context,
$\medmath{\tilde{\mathcal{C}} \;=\; \mathcal{R} \cup \mathcal{S}, \ \mathcal{S}\subset \mathcal{H},\ |\mathcal{S}|=K,}$
where $\medmath{\mathcal{S}}$ contains the top-$K$ context units that add the most \emph{unique} information beyond $\mathcal{R}$. For each candidate $\medmath{C_i\in \mathcal{H}}$, we define a dependence-aware score conditioned on the mandatory recent window:
\begin{equation}
\label{eq:light_score}
\medmath{\Delta_i^{(\mathcal{R})} \triangleq I(C_i;\hat{T}\mid \mathcal{R}).}
\end{equation}
Equivalently, we can use the corresponding normalized \textbf{RISE-restricted} score:
\begin{equation}
\label{eq:light_rise}
\medmath{\mathrm{RISE}_{\mathcal{H}\mid \mathcal{R}}(C_i)
=
\frac{I(C_i;\hat{T}\mid \mathcal{R})}{\sum_{C_j\in\mathcal{H}} I(C_j;\hat{T}\mid \mathcal{R})+\varepsilon}.}
\end{equation}
We then select $\medmath{\mathcal{S}}$ as the top-$K$ units in $\medmath{\mathcal{H}}$ by $\medmath{\Delta_i^{(\mathcal{R})}}$ (or $\mathrm{RISE}_{\mathcal{H}\mid \mathcal{R}}$).
If $\medmath{C_i = y_{t-i}}$ denotes lag-$i$, then $\mathcal{R}$ corresponds to the last $L$ lags and $\mathcal{S}$ selects a few long-range anchors (e.g., $t-24$, $t-168$) that remain informative \emph{after conditioning on the recent window}. This yields a reduced feature vector of size $L+K$ instead of $m$.
If $C_i$ is a prompt segment or retrieved chunk, then $\mathcal{R}$ corresponds to the most recent dialogue turns/instructions (or the highest-priority control blocks), while $\mathcal{S}$ retains a small number of uniquely informative retrieval/memory units whose information is not recoverable from $\mathcal{R}$.
\begin{lemma}[Conditional redundancy implies non-selection (ideal case)]
\label{lem:nonselect}
If a candidate unit $C_i\in\mathcal{H}$ is conditionally independent of the target given the recent window, i.e., $\medmath{C_i \perp\!\!\!\perp \hat{T}\mid \mathcal{R},}$ then $I(C_i;\hat{T}\mid \mathcal{R})=0$. Hence $C_i$ cannot appear in the top-$K$ set $\mathcal{S}$ unless fewer than $K$ candidates have positive scores.
\end{lemma}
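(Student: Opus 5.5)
The plan has two parts: first show that the score vanishes, then show that the top-$K$ rule cannot pick a zero-score unit while positive-score units are still available.

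\emph{Step 1 (zero score).} This mirrors Lemma~\ref{lem:cind} with $\mathcal{C}_{\setminus i}$ replaced by $\mathcal{R}$. Write the conditional mutual information as an expected Kullback--Leibler divergence,
\begin{equation*}
I(C_i;\hat{T}\mid\mathcal{R}) = \mathbb{E}_{\mathcal{R}}\!\left[ D_{\mathrm{KL}}\!\left(P_{C_i,\hat{T}\mid\mathcal{R}} \,\big\|\, P_{C_i\mid\mathcal{R}}\otimes P_{\hat{T}\mid\mathcal{R}}\right)\right].
\end{equation*}
The hypothesis $C_i \perp\!\!\!\perp \hat{T}\mid\mathcal{R}$ means the joint conditional law factorizes for $P_{\mathcal{R}}$-almost every value of $\mathcal{R}$. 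So the integrand is zero almost surely, and $I(C_i;\hat{T}\mid\mathcal{R})=0$. Hence $\Delta_i^{(\mathcal{R})}=0$. Because the normalizer in $\mathrm{RISE}_{\mathcal{H}\mid\mathcal{R}}$ is positive (it contains $\varepsilon>0$), also $\mathrm{RISE}_{\mathcal{H}\mid\mathcal{R}}(C_i)=0$. The two rankings therefore agree on this unit. The only technical care needed is the standing assumption that conditional mutual information is well defined, i.e.\ regular conditional distributions exist. I take that as given, exactly as the paper does.

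\emph{Step 2 (non-selection).} Use nonnegativity of conditional mutual information: every candidate has $\Delta_j^{(\mathcal{R})}\ge 0$. Let $P=\{C_j\in\mathcal{H}:\Delta_j^{(\mathcal{R})}>0\}$ and suppose $|P|\ge K$. Assume for contradiction that $C_i\in\mathcal{S}$. Since $|\mathcal{S}|=K$ and $C_i\notin P$, at most $K-1$ elements of $P$ lie in $\mathcal{S}$, so some $C_j\in P$ is excluded. But $\Delta_j^{(\mathcal{R})}>0=\Delta_i^{(\mathcal{R})}$, which contradicts $\mathcal{S}$ being a top-$K$ set. Such a set must satisfy $\min_{\mathcal{S}}\Delta\ge\max_{\mathcal{H}\setminus\mathcal{S}}\Delta$. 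Hence $C_i\in\mathcal{S}$ is possible only when $|P|<K$, in which case the remaining slots are filled from the zero-score units.

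\emph{Main obstacle.} The only delicate point is the interpretation of ``top-$K$'' under ties. I would fix the convention that $\mathcal{S}$ is any $K$-subset maximizing $\sum_{C_j\in\mathcal{S}}\Delta_j^{(\mathcal{R})}$, or equivalently the exchange property above, with arbitrary tie-breaking. This makes the contradiction in Step 2 rigorous regardless of how ties among zero scores are broken. The statement is asserted in the ideal population setting, so estimation error is outside the scope of the argument.
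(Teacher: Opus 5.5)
Your proof is correct and follows essentially the same route the paper relies on. The paper gives no standalone proof of this lemma, but its ingredients appear in Lemma~\ref{lem:nonneg} (conditional mutual information as an expected KL divergence, which vanishes under conditional independence) and in the proof of Proposition~\ref{prop:fast} (zero scores for conditionally independent candidates versus positive scores for informative ones, so a top-$K$ rule never prefers the former), and your exchange argument with an explicit tie convention makes the paper's informal ``in the absence of exact ties'' caveat rigorous.
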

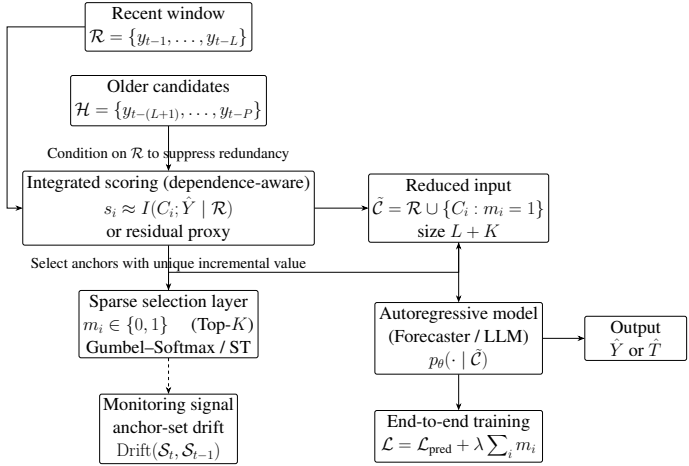
\begin{wrapfigure}{l}{0.58\linewidth}
\centering
\begin{adjustbox}{width=\linewidth}
\begin{tikzpicture}[
    font=\LARGE,
    >=Stealth,
    node distance=9mm and 16mm,
    box/.style={draw, rounded corners=2pt, inner sep=4pt, align=center},
    smallbox/.style={draw, rounded corners=2pt, inner sep=3pt, align=center},
    arrow/.style={->, line width=0.8pt},
    dashedarrow/.style={->, dashed, line width=0.8pt}
]

\node[box, minimum width=36mm] (recent)
{Recent window\\
$\mathcal{R}=\{y_{t-1},\ldots,y_{t-L}\}$};

\node[box, below=8mm of recent, minimum width=36mm] (cand)
{Older candidates\\
$\mathcal{H}=\{y_{t-(L+1)},\ldots,y_{t-P}\}$};

\node[box, below=15mm of cand, minimum width=50mm, minimum height=16mm] (score)
{Integrated scoring (dependence-aware)\\[1mm]
$s_i \approx I(C_i;\hat{Y}\mid\mathcal{R})$\\
or residual proxy};

\node[box, below=15mm of score, minimum width=50mm, minimum height=16mm] (mask)
{Sparse selection layer\\
$m_i\in\{0,1\}$ \quad (Top-$K$)\\
Gumbel--Softmax / ST};

\node[box, above=8mm of mask, right=18mm of score, minimum width=46mm] (reduced)
{Reduced input\\
$\tilde{\mathcal{C}}=\mathcal{R}\cup\{C_i:m_i=1\}$\\
size $L+K$};

\node[box, below=20mm of reduced, minimum width=44mm, minimum height=16mm] (model)
{Autoregressive model\\
(Forecaster / LLM)\\
$p_\theta(\cdot\mid\tilde{\mathcal{C}})$};

\node[box, right=14mm of model, minimum width=34mm] (out)
{Output\\
$\hat{Y}$ or $\hat{T}$};

\node[box, below=12mm of model, minimum width=56mm] (loss)
{End-to-end training\\
$\mathcal{L}=\mathcal{L}_{\text{pred}}+\lambda\sum_i m_i$};

\node[smallbox, below=12mm of mask, minimum width=46mm] (monitor)
{Monitoring signal\\
anchor-set drift\\
$\mathrm{Drift}(\mathcal{S}_t,\mathcal{S}_{t-1})$};

\draw[arrow] (recent.west) -- ++(-26mm,0mm) |- (score.west);
\draw[arrow] (cand.south) -- (score.north);

\draw[arrow] (score.east) -- (reduced.west);
\draw[arrow] (score.south) -- (mask.north);

\draw[arrow] (mask.north) -- ++(0,6mm) -| (reduced.south);

\draw[arrow] (reduced.south) -- (model.north);
\draw[arrow] (model.east) -- (out.west);

\draw[arrow] (model.south) -- (loss.north);

\draw[dashedarrow] (mask.south) -- (monitor.north);

\node[font=\Large, above=2mm of score]
{Condition on $\mathcal{R}$ to suppress redundancy};

\node[font=\Large, above=5mm of mask]
{Select anchors with unique incremental value};

\end{tikzpicture}
\end{adjustbox}
\caption{Lightweight dependence-aware selection: retain recent context for fast dynamics and sparsely select uniquely informative long-range anchors, enabling inference-time monitoring via anchor drift.}
\label{fig:integrated_lightweight}
\end{wrapfigure}
In simple terms, if a candidate context unit does not contribute any information beyond what is already present in the recent context, \textbf{RISE} will never select it unless there are no better alternatives. Duplicated or highly overlapping context should not receive additional importance simply because it appears multiple times.
\begin{corollary}[Duplicate/overlap suppression under conditioning]
\label{cor:overlap}
If two candidates $C_i$ and $C_j$ are near-duplicates in the sense that $C_i$ is (approximately) a deterministic function of $(\mathcal{R},C_j)$, then $I(C_i;\hat{T}\mid \mathcal{R})$ is (approximately) zero whenever $C_j$ is available. Thus overlap in retrieval chunks or long-range seasonal redundancy among lags is naturally suppressed.
\end{corollary}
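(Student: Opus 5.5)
The plan is to read ``whenever $C_j$ is available'' as conditioning on $C_j$ in addition to $\mathcal{R}$, and to prove the statement for the incremental score $I(C_i;\hat{T}\mid \mathcal{R},C_j)$. This is the quantity that decides whether $C_i$ adds anything once $C_j$ has been selected, or once $C_j$ is treated as part of the conditioning set. The literal score $I(C_i;\hat{T}\mid\mathcal{R})$ need not vanish. For exact duplicates $C_i=C_j$ it equals $I(C_j;\hat{T}\mid\mathcal{R})$, which can be large. So the first step of the write-up is to make this interpretation explicit, namely suppression of the \emph{second} copy once the first is in the conditioning set. This parallels Corollary~\ref{cor:duplicate}, where each duplicate is conditioned on the other through $\mathcal{C}_{\setminus i}$.

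\textbf{Exact case.} Suppose $C_i=f(\mathcal{R},C_j)$ almost surely. Then
\[
0\le I(C_i;\hat{T}\mid \mathcal{R},C_j)\le H(C_i\mid \mathcal{R},C_j)=0 .
\]
The upper bound is the standard fact that conditional mutual information is bounded by conditional entropy. If the units take continuous values, I would instead note that $C_i$ is a measurable function of the conditioning variables. Conditional independence $C_i\perp\!\!\!\perp\hat{T}\mid(\mathcal{R},C_j)$ then holds trivially, and the zero follows directly. With this in hand I would apply Lemma~\ref{lem:nonselect} with $\mathcal{R}$ replaced by $\mathcal{R}\cup\{C_j\}$, which shows that $C_i$ is not selected as an additional anchor. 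The chain rule gives a complementary statement:
\[
I(C_i,C_j;\hat{T}\mid\mathcal{R})=I(C_j;\hat{T}\mid\mathcal{R})+I(C_i;\hat{T}\mid\mathcal{R},C_j)=I(C_j;\hat{T}\mid\mathcal{R}).
\]
So the pair carries exactly the unique information of $C_j$ alone, and overlap cannot inflate the total credit.

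\textbf{Approximate case.} I would formalize ``approximately deterministic'' as $H(C_i\mid\mathcal{R},C_j)\le\delta$ for discrete or tokenized units. The same inequality then gives $I(C_i;\hat{T}\mid\mathcal{R},C_j)\le\delta$. If the approximation is instead stated as $\Pr[C_i\neq f(\mathcal{R},C_j)]\le\eta$, Fano's inequality yields $H(C_i\mid\mathcal{R},C_j)\le h(\eta)+\eta\log(|\mathcal{C}_i|-1)$, which tends to $0$ as $\eta\to 0$. The seasonal-lag and retrieval-chunk examples are then instances of this case.

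The main obstacle is the interpretive gap noted above, not any calculation. Every bound is a one-line inequality. The part that needs care is stating precisely that suppression applies to the conditional increment given $C_j$, and not to the marginal score given $\mathcal{R}$ alone. A secondary obstacle is the continuous-valued approximate case, where entropy bounds are unavailable. There I would assume a discretization, or a data-processing argument through a bounded-noise channel.
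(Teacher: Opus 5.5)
The paper gives no proof of this corollary. It is asserted right after Lemma~\ref{lem:nonselect} with only an informal gloss. The implicit argument is exactly your exact case: a deterministic function of the conditioning variables is conditionally independent of $\hat{T}$, so its conditional mutual information is zero.

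You are right that the statement cannot be proved as written, and the correction is needed. $I(C_i;\hat{T}\mid\mathcal{R})$ does not involve $C_j$, and for $C_i=C_j$ it equals $I(C_j;\hat{T}\mid\mathcal{R})$. The only meaningful reading is therefore the increment $I(C_i;\hat{T}\mid\mathcal{R},C_j)$.

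Your exact-case bound, the measurable-function argument for continuous units, and the chain-rule identity are all correct. The chain rule is the same mechanism the paper uses in Lemma~\ref{lem:group} under its within-group redundancy condition. The entropy and Fano bounds for discrete units go beyond the paper and give ``approximately zero'' a quantitative meaning.

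Two points need fixing.

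First, ``apply Lemma~\ref{lem:nonselect} with $\mathcal{R}$ replaced by $\mathcal{R}\cup\{C_j\}$'' is not a statement about the paper's selector. Algorithm~\ref{alg:lightweight} scores every candidate against the same fixed $\mathcal{R}$. Exact duplicates therefore get identical scores $I(C_j;\hat{T}\mid\mathcal{R})$, and a top-$K$ rule keeps both or neither. The summed per-candidate credit is $2I(C_j;\hat{T}\mid\mathcal{R})$, even though your chain rule shows the joint credit is only $I(C_j;\hat{T}\mid\mathcal{R})$. So the corollary's ``naturally suppressed'' fails for that selector. What you actually prove is suppression under one of two different rules:
\begin{itemize}
\item a greedy forward variant that rescores the remaining candidates by $I(C;\hat{T}\mid\mathcal{R}\cup\mathcal{S}_{\mathrm{sel}})$ after each pick, or
\item the leave-one-out score of Corollary~\ref{cor:duplicate}, under which both copies go to zero.
\end{itemize}
State that change of selector explicitly, and keep the lemma's ``unless fewer than $K$ candidates have positive scores'' caveat.

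Second, your fallback for continuous near-duplicates does not work as stated. Since $\hat{T}$ is computed from $C_i$ itself, metric closeness of $C_i$ to $f(\mathcal{R},C_j)$ does not bound the increment. Take $C_i=C_j+\epsilon Z$ with $Z\sim\mathcal{N}(0,1)$ independent, and a model whose output is $\mathrm{sign}(C_i-C_j)$. Then $I(C_i;\hat{T}\mid\mathcal{R},C_j)=\log 2$ for every $\epsilon>0$. Bounding the entropy of a discretized copy of $C_i$ does not help unless the model itself only sees the quantized value. For the seasonal-lag case you need an assumption on how $\hat{T}$ responds to $C_i$, not only on how close $C_i$ is to a function of $(\mathcal{R},C_j)$. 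One option is Lipschitz dependence together with output or sampling noise, which gives a Gaussian-channel-type bound. For token-valued LLM units your discrete bound already suffices.
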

This means that when two retrieved chunks or context units convey essentially the same information, \textbf{RISE} attributes importance to at most one of them, preventing redundancy-induced importance inflation. A good selector should remain sensitive to rapidly changing information while still retaining a small amount of relevant long-range context.
\subsection{Computational Complexity and Estimation}
\label{sec:complexity}
Computing \textbf{RISE} requires estimating $m$ conditional mutual information terms of the form
$I(C_i;\hat{T}\mid\mathcal{C}_{\setminus i})$.
Let $n$ denote the number of evaluated prompts or generation steps, and let $d$ denote the dimensionality of the chosen context-unit representation (e.g., pooled hidden states, or logit vectors). The total computational cost can be expressed as $\mathcal{O}\!\left(m \cdot \mathrm{CostCMI}(n,d)\right),$
where $\mathrm{CostCMI}(n,d)$ depends on the chosen conditional mutual information estimator (e.g., kNN-based, variational, or kernel-based). In practice, \textbf{RISE} remains lightweight for three key reasons. First, attribution operates over \emph{structured context units} rather than raw tokens, keeping $m$ small even for long prompts.
 \begin{wraptable}{l}{0.48\linewidth}
\centering
\tiny
\caption{Failure modes under redundancy and expected diagnostic signatures (output preserved).}
\label{tab:failure_modes}
\begin{adjustbox}{width=\linewidth}
\begin{tabular}{p{0.25\linewidth}p{0.38\linewidth}p{0.30\linewidth}}
\toprule
\textbf{Failure mode} & \textbf{What the baseline tends to do} & \textbf{Expected signature (diagnostic)} \\
\midrule
Redundancy blindness 
& Splits or duplicates credit across repeated / overlapping units (marginal scoring). 
& Attribution mass on duplicates \emph{grows or fragments} while the model output is unchanged. \\
\addlinespace
Surface-form sensitivity 
& Changes scores under paraphrase/reorder despite semantic equivalence. 
& Rank/overlap changes \emph{without any output change}. \\
\addlinespace
Structural dominance 
& Collapses onto structurally necessary units (e.g., question/template) under perturbation. 
& Removing the question causes a large log-prob drop regardless of evidence $\Rightarrow$ attribution collapses to Q. \\
\bottomrule
\end{tabular}
\end{adjustbox}
\end{wraptable}
Second, units can be grouped (e.g., retrieved chunks or dialogue windows) to reduce dimensionality. Third, with cached context representations and next-token distributions, \textbf{RISE} avoids re-running the LLM per unit, enabling lightweight pre-model or post-hoc inference-time auditing.

\subsection{Integrated Learned Selector}
\label{sec:learned_selector}
We implement the dependence-aware principle behind \textbf{RISE} in two complementary ways: (i) a lightweight, model-agnostic \textbf{RISE}-Lite scoring module for structured context units, and (ii) an \emph{integrated learned selector} that enforces dependence-aware context retention \emph{within} the generation pipeline. In the learned-selector variant, a small gating network operates on context-unit representations $\mathbf{z}_i$ and produces selection scores that are converted into binary retention decisions via standard differentiable relaxations. The generator then conditions only on the retained subset of context units. The selector and generator are trained end-to-end with a sparsity regularizer to encourage reliance on a minimal context set. Compared to \textbf{RISE}-Lite, this integrated selector introduces additional training cost and hyperparameters, but provides tighter pipeline integration and enables amortized dependence-aware selection at inference time. We use \textbf{RISE}-Lite for transparent auditing and monitoring, and the integrated selector when end-to-end context reduction is desired.

\subsection{Design Principle: Dependence-Aware Explanations for Trustworthy AI}
\label{sec:design_principle}
We present a key design principle for trustworthy explanations in AI systems, particularly LLMs: \emph{explanations must be dependence-aware}, reflecting each component's \emph{unique} contribution to model behavior rather than correlation or frequency. This shifts attribution from surface associations to true reliance. Traditional methods misestimate importance under redundancy or overlap; dependence-aware explanations value only information not recoverable from other components, preventing credit for repetition. A central requirement is \emph{redundancy invariance}: repeating the same content must not increase its attributed importance. Since real deployments often filter or truncate context, trustworthy explanations should remain valid under context selection; dependence-aware attribution preserves redundancy suppression even when computed on a subset. Moreover, dependence-aware explanations rely only on observable input-output behavior, requiring no access to internal mechanisms, which makes them suitable for monitoring and auditing. Dependence awareness also improves inference-time safety: failures often arise from interactions among prompts, retrieval, and memory, not isolated inputs. \textbf{RISE} can surface issues such as retrieval collapse (redundant sources), prompt drift (inconsistent instructions), and memory redundancy (repeated states) using only input-output traces, making it practical for safety-critical settings. Importantly, redundancy is often \emph{structural} in safe system design: retrieval may return overlapping evidence and agent memories may duplicate state for reliability. Explanations that ignore this can misrepresent the system, motivating methods that respect valuable context while suppressing repetition. For example, if a safety instruction is repeated five times verbatim, the model's behavior typically does not change after the first occurrence, yet marginal attribution may assign roughly five times more importance due to repetition alone. Such explanations measure frequency, not dependence; a trustworthy explanation should assign the same importance regardless of how many times the same information is restated.
 \begin{table}[htbp]
\centering
\tiny
\caption{\textbf{Baseline failure $\rightarrow$ RISE fix.}
Top: SQuAD redundancy/paraphrasing (20). Bottom: GPT-2 output-preserving tests ($n=30$, $k=2$, overlap $=0.7$).}
\label{tab:rise_merged}
\vspace{-1mm}
\begin{adjustbox}{width=\linewidth}
\begin{minipage}{\linewidth}
\centering

\begin{tabular}{
p{0.20\linewidth}
p{0.22\linewidth}
p{0.18\linewidth}
p{0.20\linewidth}
}
\toprule
\textbf{Method} 
& \textbf{Rank Stability (Spearman $\rho$)} $\uparrow$ 
& \textbf{Top-$5$ Overlap} $\uparrow$ 
& \textbf{Dup-Split} $\downarrow$ \\
\midrule
Attention      
& $0.68 \pm 0.03$ 
& $0.88 \pm 0.02$ 
& $0.29 \pm 0.01$ \\
RISE (ours)    
& $0.31 \pm 0.04$ 
& $0.75 \pm 0.03$ 
& $\mathbf{0.50 \pm 0.00}$ \\
Perturbation   
& $\mathbf{0.81 \pm 0.05}$ 
& $0.83 \pm 0.04$ 
& $0.73 \pm 0.06$ \\
\bottomrule
\end{tabular}

\vspace{2mm}

\begin{tabular}{lcc}
\toprule
\textbf{Method} & \textbf{Faithfulness Gap} $\uparrow$ & \textbf{Overlap SplitIndex} $\downarrow$ \\
\midrule
Attention & $-3.15 \pm 6.77$ & $0.8445 \pm 0.0914$ \\
Perturbation (LOUO) & $3.79 \pm 5.48$ & $0.0139 \pm 0.0749$ \\
RISE (ours) & $3.79 \pm 5.48$ & $0.0139 \pm 0.0749$ \\
\bottomrule
\end{tabular}

\end{minipage}
\end{adjustbox}
\vspace{-2mm}
\end{table}
\section{Experimental Setup}
\label{sec:experiments}
We conduct experiments to investigate whether explanation methods avoid inflated credit to redundant or dominant context when model output remains unchanged. Our aim is not to improve accuracy but to stress-test explanation behavior under long-context variations~\cite{bommasani2021opportunities}. Using GPT-2 on the SQuAD v1.1 validation set~\cite{radford2019language,rajpurkar2016squad} and an open-weight autoregressive model as a proxy for API-based systems (e.g., GPT-3.5)~\cite{brown2020language}, we focus on \emph{input--output} explanation behavior under a strict \textbf{output-preservation contract}, scoring explanations only when the model output is fixed. Under this contract, any attribution change is \emph{explanation instability}, not model-behavior change, making redundancy-invariance measurable.
Starting from a base prompt $X$, we introduce redundant context while keeping the answer unchanged via: (i) \textbf{duplication}, (ii) 
\begin{wrapfigure}{l}{0.75\linewidth}
    \centering
    \begin{tikzpicture}
\begin{axis}[
    width=\linewidth,
    height=4cm,
    ybar,
    bar width=14pt,
    ymin=0, ymax=1.05,
    ylabel={Mean Score},
    symbolic x coords={
        Stability (Spearman $\rho$) $\uparrow$,
        Top-5 Overlap $\uparrow$,
        Redundancy Sensitivity $\downarrow$
    },
    xtick=data,
    xticklabel style={align=center},
    enlarge x limits=0.25,
    legend style={
        at={(0.5,-0.25)},
        anchor=north,
        legend columns=3,
        draw=none
    },
    title={Average Performance Across Methods},
    ymajorgrids=true,
    grid style={gray!30}
]

\addplot coordinates {
    (Stability (Spearman $\rho$) $\uparrow$, 0.67)
    (Top-5 Overlap $\uparrow$, 0.88)
    (Redundancy Sensitivity $\downarrow$, 0.29)
};

\addplot coordinates {
    (Stability (Spearman $\rho$) $\uparrow$, 0.31)
    (Top-5 Overlap $\uparrow$, 0.75)
    (Redundancy Sensitivity $\downarrow$, 0.50)
};

\addplot coordinates {
    (Stability (Spearman $\rho$) $\uparrow$, 0.81)
    (Top-5 Overlap $\uparrow$, 0.83)
    (Redundancy Sensitivity $\downarrow$, 0.72)
};

\legend{Attention, RISE, Perturbation}

\end{axis}
\end{tikzpicture}
    \caption{Aggregate attribution behavior across methods. \textbf{RISE} exhibits stronger redundancy suppression (lower redundancy sensitivity) despite lower raw rank stability, reflecting dependence-aware scoring under overlap.}
    \label{fig:aggregate_bars}
\end{wrapfigure}
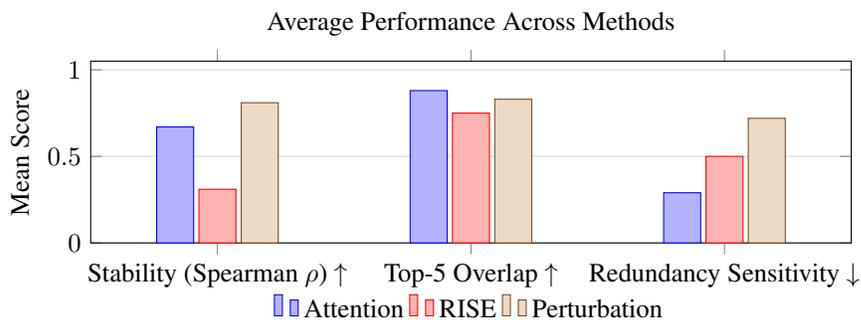\textbf{paraphrasing}, and (iii) \textbf{reordering/overlap}. These methods reflect realistic prompt engineering and memory effects. We compare \textbf{RISE} with two baseline families: \textbf{attention-based heuristics} and \textbf{perturbation-based attribution}. Under redundancy, baselines may show: (1) \emph{Redundancy blindness}, where scores inflate across duplicates; (2) \emph{Surface-form sensitivity}, where reordering affects attributions without changing outputs; and (3) \emph{Structural dominance}, where essential units disrupt generation. We evaluate using \textbf{Rank stability} (Spearman $\rho$), \textbf{Top-$K$ overlap}, and \textbf{Redundancy Sensitivity (Dup-Split)}, which measures attribution fragments under duplication. A good explainer maintains low Dup-Split with unchanged outputs, indicating \emph{redundancy invariance}. Dup-Split serves as a reliable diagnostic tool in redundancy scenarios.
\vspace{-2mm}
\section{Results}
\label{sec:results}
Our primary goal is to test whether explanation methods remain \emph{faithful under redundancy}: when the model output does not change, do attributions avoid rewarding duplicated or overlapping context, and do they avoid collapsing onto structurally dominant units?
Figures such as \autoref{fig:aggregate_bars} and \autoref{tab:rise_merged} report aggregate behavior across examples.
We observe two distinct baseline failure patterns that become prominent once prompts contain duplicates, paraphrases, or overlapping retrieval.
First, \textbf{attention-based heuristics} can appear stable (e.g., high Top-$K$ overlap), but this stability is often explained by \emph{diffusion}: importance is spread across repeated or correlated units, producing consistent-looking supports without isolating uniquely required context. In redundancy-rich prompts, attention can therefore remain ``consistent'' while still over-crediting duplicates. Second, \textbf{perturbation-based attribution} often achieves higher Spearman rank stability, but this can be misleading because perturbation 
is vulnerable to \emph{structural dominance}.
Removing the question (or similarly structural units) disrupts generation irrespective of semantic evidence, so perturbation may assign overwhelming credit to the question even when informative evidence lies in retrieved or supporting context.
Under duplication, perturbation can also inflate credit because repeated units create multiple removal points, increasing marginal effects without adding new information. RISE targets a different criterion: \emph{redundancy invariance under conditional dependence}.
By attributing only what is not recoverable from the remaining context, RISE suppresses redundancy-induced importance inflation and prevents credit splitting across near-duplicates.
These trends reflect \textbf{redundancy-aware behavior}: RISE maintains stronger redundancy suppression (lower Dup-Split) under duplication and overlap, while its lower raw rank stability is expected because it \emph{intentionally} reallocates credit away from redundant copies toward uniquely informative units, a feature under redundancy stress tests rather than a bug. In contrast, baselines can appear stable while failing to isolate unique dependence. We also observe a structural failure mode of perturbation under redundancy (Table~\ref{tab:structural_collapse}): perturbation collapses onto the question due to its structural necessity, conflating \emph{removability} with \emph{informational contribution}. RISE(-Lite), by conditioning on the remaining context, reallocates mass to context units that remain informative, yielding a compact support aligned with uniquely informative evidence. We further evaluate explanations only under \emph{output-preserving} interventions (same decoded answer/span or next-token argmax); under this contract, attribution changes indicate \emph{explanation instability}, not model-behavior change. Table~\ref{tab:rise_merged} makes the baseline failure $\rightarrow$ RISE fix explicit: \textbf{(i)} under RAG overlap, attention splits credit across overlapped chunks (high SplitIndex), whereas RISE suppresses overlap (low SplitIndex); \textbf{(ii)} in the faithfulness removal test, attention yields a negative top--bottom gap, while RISE produces a positive gap, indicating that removing top-ranked units reduces answer likelihood substantially more than removing bottom-ranked units. Per-example analyses and additional stress tests (Appendix) confirm that these effects are not driven by a small subset of instances and persist across models.
\vspace{-2mm}
\subsection{Discussion and Limitations}
\label{sec:discussion}
\textbf{RISE} provides a dependence-aware account of how autoregressive language models rely on context. Our results show that standard notions of ``explanation stability'' can be misleading under redundancy: high agreement may simply reflect \emph{importance splitting} across correlated or repeated units (diffusion), rather than reliance on uniquely informative context. By conditioning on the remaining context, \textbf{RISE} distinguishes what is essential from what is merely repeated, yielding clearer and more actionable explanations in prompt, retrieval, and memory pipelines. Table~\ref{tab:structural_collapse} highlights a complementary failure mode: perturbation-based attribution can collapse onto the question because removing it disrupts generation regardless of semantics, conflating \emph{structural necessity} with \emph{informational contribution}. 
\begin{wraptable}{l}{0.48\linewidth}
\centering
\tiny
\begin{tabular}{lcc}
\toprule
\textbf{Unit} & \textbf{RISE-Lite} & \textbf{Pert.} \\
\midrule
System (Sys)   & 0.27 & 0.00 \\
Context 1 (C1) & 0.21 & 0.00 \\
Context 2 (C2) & \textbf{0.39} & 0.08 \\
Context 3 (C3) & 0.12 & 0.00 \\
Question (Q)   & $\approx$0.00 & \textbf{0.92} \\
\bottomrule
\end{tabular}
\caption{
Representative attribution under prompt redundancy.
Perturbation collapses onto the question due to structural dominance, while RISE-Lite concentrates attribution on uniquely informative context.
}
\label{tab:structural_collapse}
\end{wraptable}
In contrast, \textbf{RISE-Lite} suppresses structurally dominant yet recoverable units and reallocates mass to context components that remain informative after conditioning. A limitation of \textbf{RISE} is that it operates over \emph{structured context units} (prompt blocks, retrieved chunks, dialogue turns) rather than individual tokens, improving interpretability and deployability at the cost of token-level granularity. Moreover, our focus is on redundancy and semantic overlap; for these goals, preserving the main conditional-dependence structure is more critical than perfectly calibrated dependence estimates. Future work will develop improved estimators, quantify approximation error, and integrate \textbf{RISE} into real-time safety and reliability pipelines for agentic and retrieval-augmented systems. 

\textbf{Ethics and Reproducibility: }
This work improves LLM transparency by mitigating misleading explanations under redundant or manipulated context, without introducing new model capabilities or ethical risks. RISE is model-agnostic, requires no retraining, and is evaluated on public models and datasets; data, models, and code are available at \url{https://anonymous.4open.science/r/LLM-Trustworthy-3448/README.md}.

\vspace{-2mm}
\section{Conclusion}
We addressed a core trustworthiness challenge in autoregressive LLMs: identifying which context elements are \emph{uniquely required} for generation under redundancy. Using conditional information as a design principle, \textbf{RISE} suppresses redundancy-induced importance inflation, remains interpretable, and supports inference-time monitoring. Across aggregate evaluation, stress tests, and cross-model diagnostics, we show that apparent stability can mask structural failures in attention- and perturbation-based heuristics. Beyond interpretability, \textbf{RISE} enables practical safeguards such as detecting retrieval collapse, prompt drift, and agent-memory redundancy at inference time, providing a foundation for trustworthy prompt auditing, retrieval validation, and monitoring in redundancy-prone LLM deployments.

\bibliography{iclr2026/iclr2026_conference}
\bibliographystyle{iclr2026_conference}

\appendix

\section{Additional Theoretical Results and Proofs}
\label{app:theory}

In the main paper, I present the core redundancy-aware guarantees of \textbf{RISE} (Lemma~3, Corollary~1, Proposition~1), as these directly motivate the empirical design and explain the observed behavior under prompt duplication and paraphrasing. In this appendix, I provide auxiliary properties (non-negativity, normalization), group-level extensions, and (optionally) the learned-selector variant. All proofs are written step-by-step to make the logical flow explicit and reproducible.

\subsection{Notation and Preliminaries}
\label{app:notation}

Let $C=\{c_1,\dots,c_m\}$ denote the set of context units (instruction segments, retrieved chunks, question, etc.). Let $Y$ denote the model output variable of interest (e.g., next-token distribution, log-probability of an answer string, or a scalar proxy derived from generation). For $i \in \{1,\dots,m\}$, let $C_{-i} := C \setminus \{c_i\}$. I write \emph{conditional unique dependence} (CUD) as
\begin{equation}
I(Y;c_i \mid C_{-i})
\;=\;
\mathbb{E}\left[\log\frac{p(Y \mid c_i, C_{-i})}{p(Y \mid C_{-i})}\right].
\end{equation}
CUD is non-negative and equals zero if and only if $Y \perp c_i \mid C_{-i}$ (conditional independence), under standard regularity assumptions.
\textbf{RISE} assigns each context unit a conditional unique-dependence contribution
\begin{equation}
\phi_i \;:=\; I(Y;c_i \mid C_{-i}),
\qquad i=1,\dots,m,
\label{eq:rise_phi_def}
\end{equation}
and uses a normalized attribution
\begin{equation}
a_i \;:=\; \frac{\phi_i}{\sum_{j=1}^m \phi_j + \varepsilon},
\label{eq:rise_norm_def}
\end{equation}
for a small $\varepsilon>0$ used only for numerical stability. (In the main text, I set $\varepsilon$ to a tiny constant and omit it for clarity.)
Lemma~3 in the main paper uses the equivalence $I(Y;c_i \mid C_{-i})=0 \Leftrightarrow (Y \perp c_i \mid C_{-i})$ to formalize redundancy suppression. The following auxiliary lemmas support the remaining properties and implementation details.

\begin{lemma}[Non-negativity]
\label{lem:nonneg}
For any context unit $c_i$, the \textbf{RISE} contribution satisfies $\phi_i = I(Y;c_i \mid C_{-i}) \ge 0$.
\end{lemma}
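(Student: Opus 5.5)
The plan is to express $\phi_i$ as an average of Kullback--Leibler divergences and then use the non-negativity of KL divergence (Gibbs' inequality), which in turn follows from Jensen's inequality.

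The first step rewrites the definition of CUD in the preliminaries by conditioning on the value of $C_{-i}$ and then of $c_i$. By the tower property,
\begin{equation*}
I(Y;c_i \mid C_{-i}) \;=\; \mathbb{E}_{(c_i, C_{-i})}\Bigl[\, D_{\mathrm{KL}}\bigl(p(\cdot \mid c_i, C_{-i}) \,\big\|\, p(\cdot \mid C_{-i})\bigr) \Bigr],
\end{equation*}
because for each fixed $(c_i, C_{-i})$ the inner expectation over $Y \sim p(\cdot \mid c_i, C_{-i})$ of the log-ratio is exactly this KL divergence. Under the standing assumption that conditional mutual information is well defined, $p(\cdot \mid c_i, C_{-i}) \ll p(\cdot \mid C_{-i})$ almost surely, so the divergence makes sense.

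The second step shows that each KL term is non-negative. Write $P = p(\cdot \mid c_i, C_{-i})$ and $Q = p(\cdot \mid C_{-i})$. Since $-\log$ is convex, Jensen's inequality gives
\begin{equation*}
D_{\mathrm{KL}}(P\|Q) \;=\; \mathbb{E}_P\!\left[-\log \tfrac{dQ}{dP}\right] \;\ge\; -\log \mathbb{E}_P\!\left[\tfrac{dQ}{dP}\right] \;\ge\; -\log 1 \;=\; 0.
\end{equation*}
The last inequality uses $\mathbb{E}_P[dQ/dP] = Q(\{dP>0\}) \le 1$. An expectation of non-negative quantities is non-negative, so $\phi_i \ge 0$.

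The only delicate point is measure-theoretic. In the continuous case one must check that the conditional densities exist, via regular conditional distributions, and that the expectation is well defined rather than of the form $\infty-\infty$. The plan is to take this from the paper's standing regularity assumption. If needed, a fully general alternative is to define CMI as a supremum over finite partitions, where each discretized term is a discrete KL divergence and is non-negative by the argument above; the supremum is then non-negative as well.

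Finally, the same argument yields the equality case used in Lemma~\ref{lem:cind}. Equality in Jensen's inequality forces $P=Q$ almost surely, that is, $Y \perp c_i \mid C_{-i}$.
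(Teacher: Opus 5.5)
Your proposal is correct and follows essentially the same route as the paper: both rewrite $I(Y;c_i\mid C_{-i})$ as an expectation over $(c_i,C_{-i})$ of $D_{\mathrm{KL}}\bigl(p(Y\mid c_i,C_{-i})\,\|\,p(Y\mid C_{-i})\bigr)$ and conclude from the pointwise non-negativity of KL divergence. The only difference is that you derive Gibbs' inequality from Jensen and add measure-theoretic remarks, whereas the paper simply cites $D_{\mathrm{KL}}(p\|q)\ge 0$ as a known fact.
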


\begin{proof}
 Let $C=(c_i,C_{-i})$ denote the full context, where $C_{-i}$ collects all context units except $c_i$. The random variable $Y$ denotes the generation target (e.g., next-token outcome or next-token distribution), and all probabilities below are taken with respect to the joint distribution induced by the data-generating process together with the autoregressive model under evaluation. By definition, conditional unique dependence is
\begin{equation}
I(Y;c_i\mid C_{-i})
=
\mathbb{E}_{(Y,c_i,C_{-i})}\!\left[
\log
\frac{p(Y,c_i\mid C_{-i})}{p(Y\mid C_{-i})\,p(c_i\mid C_{-i})}
\right].
\label{eq:cud_def_log_ratio}
\end{equation}
Using $p(Y,c_i\mid C_{-i})=p(Y\mid c_i,C_{-i})\,p(c_i\mid C_{-i})$, the ratio in~\eqref{eq:cud_def_log_ratio} simplifies to
\begin{equation}
\frac{p(Y,c_i\mid C_{-i})}{p(Y\mid C_{-i})\,p(c_i\mid C_{-i})}
=
\frac{p(Y\mid c_i,C_{-i})}{p(Y\mid C_{-i})}.
\end{equation}
Substituting back, I obtain the standard conditional form
\begin{equation}
I(Y;c_i\mid C_{-i})
=
\mathbb{E}_{(Y,c_i,C_{-i})}\!\left[
\log
\frac{p(Y\mid c_i, C_{-i})}{p(Y\mid C_{-i})}
\right].
\label{eq:cud_def_conditional}
\end{equation}

Equation~\eqref{eq:cud_def_conditional} is an average of pointwise log-likelihood ratios. Grouping terms by conditioning variables $(c_i,C_{-i})$ turns this average into an expected KL divergence between the conditional distributions $p(Y\mid c_i,C_{-i})$ and $p(Y\mid C_{-i})$.

Condition first on $(c_i,C_{-i})$ and take expectation over $Y$:
\begin{align}
I(Y;c_i\mid C_{-i})
&=
\mathbb{E}_{(c_i,C_{-i})}\!
\left[
\mathbb{E}_{Y\mid c_i,C_{-i}}
\left[
\log \frac{p(Y\mid c_i,C_{-i})}{p(Y\mid C_{-i})}
\right]
\right] \nonumber\\
&=
\mathbb{E}_{(c_i,C_{-i})}\!
\left[
D_{\mathrm{KL}}
\!\left(
p(Y\mid c_i,C_{-i})
\,\|\, 
p(Y\mid C_{-i})
\right)
\right].
\label{eq:cud_as_expected_kl}
\end{align}

For any distributions $p$ and $q$ defined on the same support, the Kullback--Leibler divergence satisfies
\begin{equation}
D_{\mathrm{KL}}(p\|q)\ge 0,
\end{equation}
with equality if and only if $p=q$ almost surely. Applying this to each realized pair $(c_i,C_{-i})$ yields
\begin{equation}
D_{\mathrm{KL}}
\!\left(
p(Y\mid c_i,C_{-i})
\,\|\, 
p(Y\mid C_{-i})
\right)\ge 0
\qquad \text{for all }(c_i,C_{-i}).
\end{equation}

Since the integrand in~\eqref{eq:cud_as_expected_kl} is non-negative everywhere, its expectation over $(c_i,C_{-i})$ is also non-negative. Therefore,
\begin{equation}
I(Y;c_i\mid C_{-i}) \ge 0,
\end{equation}
which implies $\phi_i \ge 0$ as claimed.
\end{proof}

Lemma~\ref{lem:nonneg} guarantees that each unnormalized \textbf{RISE} component $\phi_i$ is a valid non-negative ``attribution mass'' prior to the normalization step in~\eqref{eq:rise_norm_def}; thus, normalization only rescales these masses to enable comparability across context units without changing their sign.
\begin{lemma}[Normalization]
\label{lem:norm}
Assume $\sum_{j=1}^m \phi_j > 0$. Then the normalized \textbf{RISE} attributions $a_i$ in~\eqref{eq:rise_norm_def} satisfy $a_i \ge 0$ and $\sum_{i=1}^m a_i = 1$.
\end{lemma}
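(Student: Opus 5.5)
The plan is a direct computation from the definition \eqref{eq:rise_norm_def}, using Lemma~\ref{lem:nonneg} for the sign. Write $S := \sum_{j=1}^m \phi_j$; the hypothesis gives $S>0$. I will take $\varepsilon=0$, which is the convention the main text adopts when it omits $\varepsilon$ for clarity, and comment on $\varepsilon>0$ at the end.

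\emph{Non-negativity.} By Lemma~\ref{lem:nonneg}, every numerator satisfies $\phi_i = I(Y;c_i\mid C_{-i}) \ge 0$. The denominator $S+\varepsilon$ is strictly positive, because $S>0$ by assumption and $\varepsilon\ge 0$. A non-negative number divided by a positive number is non-negative, so $a_i\ge 0$ for every $i$. This step also shows each $a_i$ is well defined, with no division by zero.

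\emph{Sum to one.} Since the denominator does not depend on $i$, it factors out of the sum:
\begin{equation}
\sum_{i=1}^m a_i \;=\; \frac{1}{S+\varepsilon}\sum_{i=1}^m \phi_i \;=\; \frac{S}{S+\varepsilon}.
\end{equation}
With $\varepsilon=0$ this equals exactly $1$.

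The only real obstacle is this interpretive point about $\varepsilon$. With $\varepsilon>0$ literally retained, the identity becomes $\sum_i a_i = S/(S+\varepsilon)$, which lies in $(0,1)$ and equals $1-\varepsilon/(S+\varepsilon)$. I would add a remark stating this, and noting that $\sum_i a_i \to 1$ as $\varepsilon\to 0$, with deviation $O(\varepsilon/S)$. That deviation is negligible for the value $\varepsilon=10^{-12}$ used in practice, unless $S$ itself is near zero, which is exactly the degenerate case the paper says it reports. With this remark, the exact statement holds for $\varepsilon=0$ and holds approximately for the practical choice of $\varepsilon$.
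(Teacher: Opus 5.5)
Your proposal is correct and follows the same route as the paper's proof. In both, non-negativity comes from Lemma~\ref{lem:nonneg} divided by a positive denominator, and the sum-to-one identity comes from factoring out the common denominator. Your explicit treatment of $\varepsilon$ is a worthwhile clarification: the paper's proof silently sets $\varepsilon=0$ in the definition of $a_i$, whereas with $\varepsilon>0$ retained the sum is $S/(S+\varepsilon)<1$, exactly as you note.
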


\begin{proof}
By~\eqref{eq:rise_norm_def}, the normalized attribution for unit $i$ is
\begin{equation}
a_i \;=\; \frac{\phi_i}{\sum_{j=1}^m \phi_j}.
\label{eq:ai_def_repeat}
\end{equation}
The assumption $\sum_{j=1}^m \phi_j>0$ ensures that the denominator in~\eqref{eq:ai_def_repeat} is strictly positive, so the normalization is well-defined.
Since $\{\phi_i\}_{i=1}^m$ are non-negative ``masses'' (Lemma~\ref{lem:nonneg}), dividing by their positive total simply rescales them into weights that can be interpreted on a simplex. By Lemma~\ref{lem:nonneg}, $\phi_i\ge 0$ for all $i$. Because the denominator $\sum_{j=1}^m \phi_j$ is strictly positive by assumption, dividing a non-negative numerator by a positive denominator preserves non-negativity. Hence $a_i\ge 0$ for every $i$.

If $\sum_{j=1}^m \phi_j=0$, then all $\phi_j=0$ (since each $\phi_j\ge 0$), and the normalization would be undefined. The condition $\sum_{j=1}^m \phi_j>0$ therefore excludes exactly this degenerate case. Summing~\eqref{eq:ai_def_repeat} over $i$ yields
\begin{equation}
\sum_{i=1}^m a_i
\;=\;
\sum_{i=1}^m \frac{\phi_i}{\sum_{j=1}^m \phi_j}
\;=\;
\frac{\sum_{i=1}^m \phi_i}{\sum_{j=1}^m \phi_j}
\;=\; 1,
\end{equation}
where the second equality factors out the constant denominator and the final equality follows because the numerator and denominator are the same finite positive quantity.
\end{proof}

Lemma~\ref{lem:norm} justifies interpreting \textbf{RISE} explanations as a probability distribution over context units: $a_i$ is a non-negative weight and the weights sum to one, which enables Top-$k$ overlap and mass-splitting diagnostics in the experimental section.

In some settings, it is useful to attribute importance to \emph{groups} of units (e.g., all tokens belonging to one retrieved passage, or all instruction lines), rather than individual units. I include this extension for completeness and to support ablations. Let $G$ be a partition of indices $\{1,\dots,m\}$ into disjoint groups $g \in G$. Define $c_g := \{c_i : i\in g\}$ and $C_{-g} := C \setminus c_g$.

\begin{lemma}[Group attribution]
\label{lem:group}
The group contribution $\Phi_g := I(Y;c_g\mid C_{-g})$ satisfies $\Phi_g \ge \phi_i$ for any $i\in g$, and admits the chain decomposition
\begin{equation}
I(Y;c_g\mid C_{-g})
=
\sum_{i\in g} I\!\left(Y;c_i \,\middle|\, C_{-g}, \{c_j\}_{j\in g,\, j<i}\right),
\label{eq:group_chain}
\end{equation}
for any fixed ordering of indices within the group.
\end{lemma}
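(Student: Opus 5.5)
The plan is to prove the chain decomposition~\eqref{eq:group_chain} first and then obtain the inequality $\Phi_g \ge \phi_i$ as a consequence of it, using Lemma~\ref{lem:nonneg}.

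\textbf{Chain decomposition.} Fix an ordering $i_1<\dots<i_k$ of the indices in $g$, and write $c_{<i_\ell} := \{c_{i_1},\dots,c_{i_{\ell-1}}\}$. Following the proof of Lemma~\ref{lem:nonneg}, I will use the conditional form
\begin{equation*}
I(Y;c_g\mid C_{-g}) = \mathbb{E}\!\left[\log \frac{p(Y\mid c_g, C_{-g})}{p(Y\mid C_{-g})}\right].
\end{equation*}
The key step is to telescope the pointwise log-ratio:
\begin{equation*}
\log\frac{p(Y\mid c_g,C_{-g})}{p(Y\mid C_{-g})} = \sum_{\ell=1}^{k}\log\frac{p(Y\mid C_{-g}, c_{<i_\ell}, c_{i_\ell})}{p(Y\mid C_{-g}, c_{<i_\ell})}.
\end{equation*}
This identity holds because consecutive numerators and denominators cancel, and because $(C_{-g},c_{<i_k},c_{i_k}) = (C_{-g},c_g)$. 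Taking the expectation under the joint law of $(Y,c_g,C_{-g})$ and using linearity, each summand becomes
\begin{equation*}
\mathbb{E}\!\left[\log\frac{p(Y\mid C_{-g},c_{<i_\ell},c_{i_\ell})}{p(Y\mid C_{-g},c_{<i_\ell})}\right].
\end{equation*}
The variables omitted from this term are $c_{i_{\ell+1}},\dots,c_{i_k}$, and they do not appear in the integrand, so marginalizing them out leaves exactly $I(Y;c_{i_\ell}\mid C_{-g},c_{<i_\ell})$. This gives~\eqref{eq:group_chain}.

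\textbf{Inequality.} Since~\eqref{eq:group_chain} holds for any ordering, for a given $i\in g$ I choose an ordering that places $i$ last. The final term of the decomposition is then
\begin{equation*}
I\bigl(Y;c_i\mid C_{-g}, c_g\setminus\{c_i\}\bigr) = I(Y;c_i\mid C_{-i}) = \phi_i,
\end{equation*}
because $C_{-g}\cup(c_g\setminus\{c_i\}) = C_{-i}$. Each of the remaining terms is a conditional mutual information, so it is non-negative. This follows from the same expected-KL argument as in Lemma~\ref{lem:nonneg}, which never used the particular form of the conditioning set and therefore applies verbatim with $C_{-g}\cup c_{<i_\ell}$ in place of $C_{-i}$. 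Dropping these non-negative terms yields $\Phi_g\ge\phi_i$.

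\textbf{Main obstacle.} The step needing the most care is the telescoping argument's reliance on well-defined conditional densities at every intermediate conditioning level, together with the integrability needed to split the expectation of the sum into a sum of expectations. I will handle this by invoking the same standing regularity assumption as in Lemma~\ref{lem:nonneg}, namely that all the conditional mutual informations involved are well defined and finite. If one of them is infinite, then both sides of~\eqref{eq:group_chain} equal $+\infty$, and the inequality holds trivially.
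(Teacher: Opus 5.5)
Your proof is correct, and for the inequality it takes a genuinely different and stronger route than the paper. For the chain decomposition the paper simply cites the chain rule; your telescoping of the pointwise log-ratios re-derives it, which is fine.

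For $\Phi_g \ge \phi_i$, the paper places $i$ \emph{first} in the ordering. That only gives $\Phi_g \ge I(Y;c_i\mid C_{-g})$. Because conditional information is not monotone in the conditioning set, the paper then declares that $\Phi_g\ge\phi_i$ is not universal. It proves the inequality only under the extra redundancy assumption $I(Y;c_g\setminus c_i\mid C_{-g},c_i)=0$.

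Placing $i$ \emph{last}, as you do, makes the final increment's conditioning set exactly $C_{-g}\cup(c_g\setminus c_i)=C_{-i}$. You thus get the identity $\Phi_g=\phi_i+I(Y;c_g\setminus c_i\mid C_{-g})$, so the inequality holds for every joint distribution, with equality exactly when the rest of the group is uninformative about $Y$ given $C_{-g}$. Your argument therefore shows the paper's caveat is unnecessary.

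It also exposes a slip in the paper's sufficient case. Its step $I(Y;c_i\mid C_{-g})=\phi_i$ does not follow from the stated condition. Two exact, informative duplicates in $g$ satisfy the condition, yet have $\phi_i=0<\Phi_g$; the inequality itself survives. The paper's ``$i$ first'' bound $\Phi_g\ge I(Y;c_i\mid C_{-g})$ is not lost either: your decomposition recovers it with the other ordering.

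One cosmetic fix: write the ordering as an arbitrary enumeration $g=\{i_1,\dots,i_k\}$ rather than $i_1<\dots<i_k$, since you later reorder to put $i$ last.
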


\begin{proof}
We proceed in two parts: (i) a chain decomposition that is always true, and (ii) a careful inequality argument explaining when and why group attribution dominates single-unit attribution.

\paragraph{Part I: Chain decomposition (always true).}
Let fix any ordering of the indices in $g$, say $g=\{i_1,i_2,\dots,i_{|g|}\}$. For notational convenience, define the prefix set
\begin{equation}
c_{g,<k} \;:=\; \{c_{i_1},\dots,c_{i_{k-1}}\},
\end{equation}
with $c_{g,<1}=\varnothing$.
Apply the chain rule for conditional unique dependence to the collection $c_g=(c_{i_1},\dots,c_{i_{|g|}})$ conditioned on $C_{-g}$:
\begin{align}
I(Y;c_g\mid C_{-g})
&= I\!\left(Y;(c_{i_1},\dots,c_{i_{|g|}})\mid C_{-g}\right) \nonumber\\
&= \sum_{k=1}^{|g|} I\!\left(Y;c_{i_k}\,\middle|\, C_{-g}, c_{g,<k}\right).
\label{eq:group_chain_explicit}
\end{align}
Re-indexing the sum yields exactly~\eqref{eq:group_chain}. This establishes the decomposition.
Each term $I\!\left(Y;c_{i_k}\mid C_{-g}, c_{g,<k}\right)$ measures the \emph{additional} information about $Y$ obtained by revealing $c_{i_k}$ after the rest of the context outside the group has been fixed ($C_{-g}$) and after the earlier group elements $c_{g,<k}$ have already been revealed. Thus the sum accumulates incremental ``new'' information contributed by the group as a whole. Each summand in~\eqref{eq:group_chain_explicit} is a conditional unique dependence. By Lemma~\ref{lem:nonneg}, every term is non-negative. Hence $\Phi_g=I(Y;c_g\mid C_{-g})\ge 0$.

\paragraph{Part II: Dominance over a single unit (conditions and proof).}
Since all summands in~\eqref{eq:group_chain_explicit} are non-negative, the total sum is at least as large as any single summand:
\begin{equation}
\Phi_g
=
\sum_{k=1}^{|g|} I\!\left(Y;c_{i_k}\,\middle|\, C_{-g}, c_{g,<k}\right)
\;\ge\;
I\!\left(Y;c_{i_k}\,\middle|\, C_{-g}, c_{g,<k}\right)
\quad \text{for all }k.
\label{eq:phi_ge_any_increment}
\end{equation}
In particular, if I choose an ordering with a specific $i\in g$ placed first (so $c_{g,<1}=\varnothing$ and $i_1=i$), then~\eqref{eq:phi_ge_any_increment} gives the clean bound
\begin{equation}
\Phi_g \;\ge\; I(Y;c_i\mid C_{-g}).
\label{eq:phi_ge_i_given_Cminusg}
\end{equation}
This inequality holds unconditionally, i.e., for \emph{any} joint distribution.
Note that $C_{-g}=C\setminus c_g$ removes \emph{all} units in the group, while $C_{-i}=C\setminus c_i$ removes only the single unit $c_i$. Therefore, for any $i\in g$,
\begin{equation}
C_{-g} \subseteq C_{-i},
\end{equation}
and in fact $C_{-i} = (C_{-g}) \cup (c_g\setminus c_i)$.
Comparing $\Phi_g$ to $\phi_i$ requires comparing conditional unique dependences under \emph{different} conditioning sets. In general, conditional unique dependence is not monotone in the conditioning set, so $\Phi_g \ge \phi_i$ is not a universal inequality without additional structure. What is universally true is the chain decomposition and the bound~\eqref{eq:phi_ge_i_given_Cminusg}; the additional step to $\phi_i$ holds under typical prompt-grouping regimes where within-group members are designed to be redundant and not to create suppressor effects.
A sufficient (and natural) condition in prompt-grouping settings is that the remaining group elements are conditionally redundant for predicting $Y$ once $(C_{-g},c_i)$ is known, i.e.,
\begin{equation}
I\!\left(Y; c_g\setminus c_i \,\middle|\, C_{-g}, c_i \right)=0.
\label{eq:group_redundant_condition}
\end{equation}
Under~\eqref{eq:group_redundant_condition}, the chain rule gives
\begin{align}
\Phi_g
= I(Y;c_g\mid C_{-g})
&= I(Y;c_i\mid C_{-g}) + I\!\left(Y;c_g\setminus c_i \,\middle|\, C_{-g}, c_i\right) \nonumber\\
&= I(Y;c_i\mid C_{-g}).
\label{eq:phi_equals_i_given_Cminusg}
\end{align}
Combining~\eqref{eq:phi_equals_i_given_Cminusg} with the fact that $C_{-i}=(C_{-g})\cup(c_g\setminus c_i)$ and~\eqref{eq:group_redundant_condition} implies that adding $(c_g\setminus c_i)$ to the conditioning does not change the predictive content of $c_i$ for $Y$, yielding
\begin{equation}
I(Y;c_i\mid C_{-g}) = I(Y;c_i\mid C_{-i}) = \phi_i.
\end{equation}
Therefore, under~\eqref{eq:group_redundant_condition},
\begin{equation}
\Phi_g \;=\; I(Y;c_i\mid C_{-g}) \;=\; \phi_i,
\end{equation}
and hence $\Phi_g \ge \phi_i$ holds (with equality in this sufficient case). In practice, grouping is typically used precisely when members of the same group are internally overlapping (e.g., tokens within a retrieved passage, lines within an instruction block, or near-duplicate retrieved chunks). In such regimes,~\eqref{eq:group_redundant_condition} (or an approximate version of it) is a reasonable modeling assumption, so group attribution $\Phi_g$ behaves as an upper envelope of single-unit contributions while still decomposing additively into non-negative increments via~\eqref{eq:group_chain}.
\end{proof}

Lemma~\ref{lem:group} supports aggregating \textbf{RISE} into chunk-level or instruction-level explanations without changing the underlying dependence-aware semantics.

\section{Learned Selector Variant}
\label{app:selector}

This section is optional and can be omitted if space or scope is a concern. We include it here to separate the workshop-focused core contribution (dependence-aware attribution under redundancy) from the optional learned selection mechanism. In some deployments, we may wish to automatically select a small subset of context units before attribution (e.g., to reduce compute or to focus on the minimal support set). A learned selector $s(C)$ can produce a binary mask over units, with \textbf{RISE} then applied to the selected subset. This appendix formalizes when such selection preserves redundancy guarantees.
Algorithm~\ref{alg:lightweight} is a \emph{pre-model} or \emph{pre-prompt} module: it can operate on compact representations of context units (pooled hidden states, embeddings, or summary statistics) and does not require retraining the downstream model. It is therefore easy to audit and integrate into inference-time monitoring. A context unit that adds no new information beyond what is already available should never be selected as important.
\begin{algorithm}[H]
\caption{Lightweight Dependence-Aware Context Selection}
\label{alg:lightweight}
\scriptsize
\begin{algorithmic}[1]
\Require Context units $\mathcal{C}=\{C_1,\ldots,C_m\}$ (ordered), mandatory window $L$, anchor budget $K$, target $\hat{T}$
\Ensure Reduced context $\tilde{\mathcal{C}}$

\State $\mathcal{R}\leftarrow \{C_1,\ldots,C_L\}$,\quad 
$\mathcal{H}\leftarrow \{C_{L+1},\ldots,C_m\}$
\ForAll{$C_i\in \mathcal{H}$}
    \State $\Delta_i^{(\mathcal{R})} \leftarrow I(C_i;\hat{T}\mid \mathcal{R})$
\EndFor
\State $\mathcal{S}\leftarrow \text{Top-}K$ units in $\mathcal{H}$ ranked by $\Delta_i^{(\mathcal{R})}$
\State \Return $\tilde{\mathcal{C}} \leftarrow \mathcal{R}\cup \mathcal{S}$
\end{algorithmic}
\end{algorithm}

\begin{proposition}[Selector-consistent redundancy suppression]
\label{prop:selector}
Assume a selector returns a subset $S\subseteq \{1,\dots,m\}$ such that for any redundant unit $c_i$ with $Y \perp c_i \mid C_{-i}$, either (i) $i\notin S$, or (ii) $S$ contains a conditioning set that renders $c_i$ redundant in the restricted context. Then \textbf{RISE} computed on the selected context preserves redundancy suppression: $I(Y;c_i \mid C_{S\setminus\{i\}})=0$ for all redundant $i\in S$.
\end{proposition}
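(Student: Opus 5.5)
The plan is a case split on the two alternatives in the hypothesis, applied to a fixed redundant unit $c_i$ with $Y \perp c_i \mid C_{-i}$. The conclusion only concerns indices $i\in S$, so alternative (i) ($i\notin S$) is vacuous. The whole argument therefore reduces to alternative (ii).

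\emph{Fixing the meaning of (ii).} The phrase ``$S$ contains a conditioning set that renders $c_i$ redundant in the restricted context'' is informal. I will read it as the assertion that the restricted conditioning set $C_{S\setminus\{i\}}$ screens $c_i$ off from $Y$, i.e.\ $Y \perp c_i \mid C_{S\setminus\{i\}}$.

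\emph{Conclusion under this reading.} The result then follows from Lemma~\ref{lem:nonneg} and its equality case. As in the proof of that lemma, write
\[
I(Y;c_i\mid C_{S\setminus\{i\}})=\mathbb{E}\bigl[D_{\mathrm{KL}}\bigl(p(Y\mid c_i,C_{S\setminus\{i\}})\,\|\,p(Y\mid C_{S\setminus\{i\}})\bigr)\bigr].
\]
Conditional independence makes the two conditionals coincide almost surely. Each KL term is then zero, and so is its expectation.

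\emph{Main obstacle: why the informal reading is needed.} Conditional independence given the full remainder $C_{-i}$ does \emph{not} transfer to a subset $C_{S\setminus\{i\}}\subsetneq C_{-i}$, because conditional mutual information is not monotone in the conditioning set. This is the same caveat noted in Part II of Lemma~\ref{lem:group}. For instance, if $c_i=c_j$ is a duplicate and $c_j$ was dropped by the selector, $c_i$ becomes informative again. Hence (ii) must genuinely carry the screening property; it cannot be derived from $Y\perp c_i\mid C_{-i}$ alone.

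\emph{Optional concrete sufficient condition.} To make (ii) checkable, I would verify it in a structured case: $S\setminus\{i\}\supseteq A$, where $A$ is a set with $Y\perp c_i\mid C_A$ and $Y\perp C_{(S\setminus\{i\})\setminus A}\mid (C_A,c_i)$. Equivalently, the extra selected units add no information once $(C_A,c_i)$ is known. The chain rule (as in Lemma~\ref{lem:group}) then gives
\[
I(Y;c_i,C_{(S\setminus\{i\})\setminus A}\mid C_A)=I(Y;c_i\mid C_A)+0=0 .
\]
Expanding the same quantity in the other order and using non-negativity of both terms forces $I(Y;c_i\mid C_{S\setminus\{i\}})=0$. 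This covers the near-duplicate setting of Corollary~\ref{cor:overlap}: if $c_i$ is a deterministic function of $(C_A,c_j)$ and $j\in S$, the same argument applies with $A\cup\{j\}$ in place of $A$.
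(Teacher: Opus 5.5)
Your proof is correct and is essentially the paper's own argument. Case (i) is vacuous, case (ii) is read as $Y\perp c_i\mid C_{S\setminus\{i\}}$, and the equivalence between conditional independence and vanishing conditional mutual information (zero expected KL) finishes it. Your non-monotonicity remark makes explicit what the paper leaves implicit: the full-context hypothesis $Y\perp c_i\mid C_{-i}$ does no work, so (ii) must supply the screening on its own. In your optional near-duplicate example, the cleaner justification is that $c_i$ is then a function of $C_{S\setminus\{i\}}$, so $I(Y;c_i\mid C_{S\setminus\{i\}})=0$ directly. Rerunning the structured argument with $A\cup\{j\}$ would still need the second screening condition, which determinism does not supply.
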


\begin{proof}
We prove the statement by following the redundancy certificate from the full context to the selected (restricted) context.
Let $C=\{c_1,\dots,c_m\}$ denote the full set of context units and let $S\subseteq\{1,\dots,m\}$ be the set returned by the selector. Fix an index $i$ that is \emph{redundant} in the full context in the sense that
\begin{equation}
Y \perp c_i \mid C_{-i}.
\label{eq:full_redundancy_ci}
\end{equation}
Our goal is to show that, whenever $i\in S$, the same redundancy holds when we compute \textbf{RISE} within the restricted conditioning set $C_{S\setminus\{i\}}$, i.e.,
\begin{equation}
I\!\left(Y;c_i \mid C_{S\setminus\{i\}}\right)=0.
\label{eq:goal_selected_cud_zero}
\end{equation}

Equation~\eqref{eq:full_redundancy_ci} states that $c_i$ contains no \emph{unique} information about $Y$ once the rest of the full context is known. Selection can only preserve this suppression if the selector either drops $c_i$ or retains enough of the relevant conditioning information to certify the same conditional independence after restriction.

By the standard equivalence between conditional independence and conditional unique dependence,
\begin{equation}
Y \perp c_i \mid C_{-i}
\quad\Longleftrightarrow\quad
I(Y;c_i\mid C_{-i})=0.
\label{eq:ci_cud_equiv_full}
\end{equation}
Thus, under~\eqref{eq:full_redundancy_ci}, the \textbf{RISE} numerator for unit $i$ in the \emph{full} context vanishes.

We now propagate this redundancy statement through the selector. There are two cases, matching the assumption of the proposition.

If $i\notin S$, then $c_i$ is not present in the selected context and no attribution term for $i$ is computed in the restricted \textbf{RISE}. The claim “for all redundant $i\in S$” is therefore vacuously satisfied in this case. 
Assume $i\in S$. By the proposition’s assumption, the selected set contains a conditioning set that renders $c_i$ redundant \emph{within the restricted context}. Concretely, this means that the variables retained in $C_{S\setminus\{i\}}$ are sufficient to imply
\begin{equation}
Y \perp c_i \mid C_{S\setminus\{i\}}.
\label{eq:selected_redundancy_ci}
\end{equation}
Intuitively, even though we no longer condition on all of $C_{-i}$, the selector has retained the particular subset of context units needed to screen off the dependence between $c_i$ and $Y$.

Once~\eqref{eq:selected_redundancy_ci} holds, the \textbf{RISE} numerator computed on the selected context must again be zero, because conditional unique dependence measures exactly the remaining dependence after conditioning.

Applying the same equivalence as in~\eqref{eq:ci_cud_equiv_full}, but now with conditioning set $C_{S\setminus\{i\}}$, we obtain
\begin{equation}
Y \perp c_i \mid C_{S\setminus\{i\}}
\quad\Longleftrightarrow\quad
I\!\left(Y;c_i \mid C_{S\setminus\{i\}}\right)=0.
\label{eq:ci_cud_equiv_selected}
\end{equation}
Combining~\eqref{eq:selected_redundancy_ci} with~\eqref{eq:ci_cud_equiv_selected} proves~\eqref{eq:goal_selected_cud_zero} for any redundant $i\in S$.

In both cases, either the selector excludes the redundant unit or it retains a conditioning set that certifies redundancy, the \textbf{RISE} computed on the selected context preserves redundancy suppression:
\begin{equation}
I\!\left(Y;c_i \mid C_{S\setminus\{i\}}\right)=0
\quad\text{for all redundant } i\in S.
\end{equation}
\end{proof}

Proposition~\ref{prop:selector} clarifies that selection does not invalidate the redundancy-aware objective provided the selector does not systematically remove the variables needed to certify redundancy; equivalently, selectors that preserve (or reintroduce) a redundancy-screening conditioning set maintain \textbf{RISE}'s suppression behavior after restriction.

\begin{proposition}[Preserving fast-changing dynamics]
\label{prop:fast}
Assume the generation target admits a decomposition into a short-term component and a long-range component in the sense that $\medmath{\hat{T} = f(\mathcal{R}) + g(\mathcal{A}) + \eta,}$
where $\mathcal{A}\subseteq \mathcal{H}$ is a small set of long-range anchors and $\eta$ is noise independent of $(\mathcal{R},\mathcal{A})$. Then a selector that always retains $\mathcal{R}$ preserves sensitivity to fast changes captured in $\mathcal{R}$, while selecting anchors in $\mathcal{H}$ by $I(C_i;\hat{T}\mid \mathcal{R})$ recovers the long-range information in $\mathcal{A}$ (in the population setting).
\end{proposition}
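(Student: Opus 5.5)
The plan is to make the two informal claims of Proposition~\ref{prop:fast} precise and prove each from the additive structure $\hat{T}=f(\mathcal{R})+g(\mathcal{A})+\eta$. Claim (a), sensitivity to fast dynamics, is essentially structural. Because the selector always outputs $\tilde{\mathcal{C}}=\mathcal{R}\cup\mathcal{S}$, the conditional law of $\hat{T}$ given $\tilde{\mathcal{C}}$ depends on $\mathcal{R}$ through $f(\mathcal{R})$ exactly as it does given the full context. I will show $I(\hat{T};\mathcal{R}\mid \mathcal{S}) = I(\hat{T};\mathcal{R}\mid \mathcal{C}_{\setminus\mathcal{R}})$ whenever $\mathcal{A}\subseteq\mathcal{S}$, using that $\hat{T}$ depends on the context only through $(f(\mathcal{R}),g(\mathcal{A}))$. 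Equivalently, $\mathbb{E}[\hat{T}\mid\tilde{\mathcal{C}}]=f(\mathcal{R})+g(\mathcal{A})$, so any change in $\mathcal{R}$ moves the prediction by exactly the corresponding change in $f$. No truncation is involved, since $\mathcal{R}$ is never dropped.

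Claim (b) is that scoring by $I(C_i;\hat{T}\mid\mathcal{R})$ recovers $\mathcal{A}$. For a non-anchor $C_i\in\mathcal{H}\setminus\mathcal{A}$, I want $C_i\perp\hat{T}\mid\mathcal{R}$, which gives $\Delta_i^{(\mathcal{R})}=0$ and, by Lemma~\ref{lem:nonselect}, non-selection. Given $\mathcal{R}$, $\hat{T}$ equals $f(\mathcal{R})$ plus $g(\mathcal{A})+\eta$, so it suffices that $C_i\perp (\mathcal{A},\eta)\mid\mathcal{R}$. The independence from $\eta$ is given. Independence of non-anchors from the anchors given $\mathcal{R}$ is \emph{not} implied by the stated hypotheses, so I will add it explicitly as the population-setting assumption that non-anchors carry no information about $g(\mathcal{A})$ beyond $\mathcal{R}$. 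A weaker form would require only $I(C_i; g(\mathcal{A})\mid\mathcal{R})=0$, which suffices by the data-processing inequality. For an anchor $C_a\in\mathcal{A}$, I will show $\Delta_a^{(\mathcal{R})}>0$ whenever $g$ genuinely depends on $C_a$ and $C_a$ is not a function of $\mathcal{R}$. To do this, I write $I(C_a;\hat{T}\mid\mathcal{R})\ge I(C_a; g(\mathcal{A})+\eta\mid \mathcal{R})$, argue nondegeneracy, and use Lemma~\ref{lem:nonneg} together with the equality case of KL. Then, with $K\ge|\mathcal{A}|$, the top-$K$ set contains $\mathcal{A}$.

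The main obstacle is the anchor side of (b). Positivity of $I(C_a;\hat{T}\mid\mathcal{R})$ can fail when anchors interact, for example when $g$ depends on $C_a$ only jointly with another anchor, as in an XOR-type $g$; marginal conditional scores are then zero. My first attempt will restrict to an additive $g(\mathcal{A})=\sum_{a} g_a(C_a)$ with mutually independent anchors given $\mathcal{R}$, where each $g_a(C_a)$ enters as an independent additive signal. In that case positivity follows from the entropy power or KL argument above. I will state the general case as requiring this non-synergy condition.
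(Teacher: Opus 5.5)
Your overall route is the paper's. Claim (a) is structural because $\mathcal{R}$ is never dropped. Claim (b) passes to the residual $\hat{T}-f(\mathcal{R})=g(\mathcal{A})+\eta$, shows that non-anchors score zero and anchors score positively, and concludes by top-$K$ separation with $K\ge|\mathcal{A}|$. On anchor positivity you are more careful than the paper. Its proof writes $I(A;\hat{T}\mid\mathcal{R})=I(A;g(\mathcal{A})\mid\mathcal{R})>0$, although data processing only gives $\le$, and it simply assumes positivity as ``relevance''. Your XOR remark correctly shows that a non-synergy hypothesis is needed.

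Two steps, however, fail as written. First, the identity $I(\hat{T};\mathcal{R}\mid\mathcal{S})=I(\hat{T};\mathcal{R}\mid\mathcal{C}_{\setminus\mathcal{R}})$ is false. Let $D=\mathcal{H}\setminus\mathcal{S}$ and suppose $\hat{T}\perp D\mid(\mathcal{R},\mathcal{S})$, which holds when $\mathcal{A}\subseteq\mathcal{S}$ and the noise is independent of the context. Expanding $I(\hat{T};\mathcal{R},D\mid\mathcal{S})$ by the chain rule in both orders gives $I(\hat{T};\mathcal{R}\mid\mathcal{S})=I(\hat{T};\mathcal{R}\mid\mathcal{H})+I(\hat{T};D\mid\mathcal{S})$, so only $\ge$ holds. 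A discarded non-anchor that duplicates $\mathcal{R}$ makes the right-hand side zero, while the left-hand side is positive as soon as $f(\mathcal{R})$ is not a.s.\ determined by $\mathcal{S}$. Such a duplicate (a repeated lag or instruction) is exactly the redundancy the selector removes. Nor is the identity ``equivalent'' to your conditional-expectation statement. Either state the inequality, or use $p(\hat{T}\mid\tilde{\mathcal{C}})=p(\hat{T}\mid\mathcal{R},\mathcal{A})$, which is all the paper's structural argument needs. Note also that $\mathbb{E}[\hat{T}\mid\tilde{\mathcal{C}}]=f(\mathcal{R})+g(\mathcal{A})+\mathbb{E}[\eta]$ unless $\eta$ is centred.

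Second, ``the independence from $\eta$ is given'' misreads the hypothesis, which only gives $\eta\perp(\mathcal{R},\mathcal{A})$. Moreover, you need the joint statement $C_i\perp(\mathcal{A},\eta)\mid\mathcal{R}$, not $C_i\perp\mathcal{A}\mid\mathcal{R}$ and $C_i\perp\eta$ separately. Take $\mathcal{R}$ trivial, $A,\eta$ independent fair bits, $\hat{T}=A+\eta$, and a non-anchor $C_i=A\oplus\eta$. Then $C_i$ is independent of $A$, so even your weaker condition $I(C_i;g(\mathcal{A})\mid\mathcal{R})=0$ holds, and it is independent of $\eta$. Yet $C_i=\mathbf{1}\{\hat{T}=1\}$, so $I(C_i;\hat{T})=1$ bit beats the anchor's $I(A;\hat{T})=\tfrac12$ bit, and $C_i$ displaces $A$ when $K=1$. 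You must assume $\eta\perp(\mathcal{R},\mathcal{H})$. The paper's proof silently uses exactly this ($\eta\perp(C_i,\mathcal{R},\mathcal{A})$), and it is also what makes the law of $\eta$ given $\tilde{\mathcal{C}}$ constant in (a).

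Finally, for positivity in your additive case, use the KL route rather than entropy power. For discrete $g_a(C_a)$ the entropy power inequality only gives the non-strict $h(g_a(C_a)+W)\ge h(W)$, and it is unavailable for discrete noise. Given $\mathcal{R}=r$ and $C_a=c$, $\hat{T}$ has the conditional law of $W=\sum_{b\ne a}g_b(C_b)+\eta$ given $\mathcal{R}=r$, shifted by $f(r)+g_a(c)$. No probability law on $\mathbb{R}$ is invariant under a nonzero shift. Hence the KL integrand is positive with positive probability exactly when $g_a(C_a)$ is not a.s.\ determined by $\mathcal{R}$. That is the right nondegeneracy condition, rather than ``$C_a$ is not a function of $\mathcal{R}$''.
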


\begin{proof}
We prove the two claims separately: (i) retaining $\mathcal{R}$ preserves sensitivity to fast changes, and (ii) conditional selection by $I(C_i;\hat{T}\mid \mathcal{R})$ is sufficient to recover the long-range anchors in $\mathcal{A}$ in the population setting.
Let $\mathcal{R}$ denote the mandatory recent window retained by the selector, and let $\mathcal{H}$ denote the pool of additional candidates (e.g., older context, retrieved chunks, memory entries). The model target satisfies
\begin{equation}
\hat{T} = f(\mathcal{R}) + g(\mathcal{A}) + \eta,
\label{eq:decomp}
\end{equation}
where $\mathcal{A}\subseteq \mathcal{H}$ is the (small) set of anchor variables carrying long-range information and $\eta$ is independent noise, $\eta \perp (\mathcal{R},\mathcal{A})$.
The selector always keeps $\mathcal{R}$ and ranks candidates $C_i\in\mathcal{H}$ by the conditional unique-dependence score
\begin{equation}
\Delta_i \;=\; I(C_i;\hat{T}\mid \mathcal{R}).
\label{eq:cud_score}
\end{equation}

Conditioning on $\mathcal{R}$ explicitly removes the short-term contribution $f(\mathcal{R})$ from what is \emph{left to explain} in $\hat{T}$. After conditioning, any remaining dependence between a candidate $C_i$ and $\hat{T}$ must be due to information in $C_i$ about the long-range term $g(\mathcal{A})$ (up to noise), which is exactly what we want anchors to capture.
By~\eqref{eq:decomp}, the short-term component of $\hat{T}$ is a function of the recent window $\mathcal{R}$. Therefore, any change in the recent regime (e.g., new constraints, a recent shift in a time series, or newly retrieved up-to-date information) modifies $\hat{T}$ through $f(\mathcal{R})$.
Since the selector always retains $\mathcal{R}$ by design, the generator always receives the variables that determine the short-term component $f(\mathcal{R})$. Hence, regardless of which additional anchors from $\mathcal{H}$ are selected, the generation remains sensitive to fast changes encoded in $\mathcal{R}$.
This establishes the first claim.

Once $\mathcal{R}$ is retained, the only remaining source of systematic variation in $\hat{T}$ is the long-range term $g(\mathcal{A})$ (plus independent noise). The second part shows that conditional selection by~\eqref{eq:cud_score} can recover this long-range information.

Condition on $\mathcal{R}$ and define the residual target
\begin{equation}
\hat{T}_{\mathrm{res}} \;:=\; \hat{T} - f(\mathcal{R}).
\end{equation}
Using~\eqref{eq:decomp}, we have
\begin{equation}
\hat{T}_{\mathrm{res}} \;=\; g(\mathcal{A}) + \eta.
\label{eq:residual}
\end{equation}
Because $\eta \perp (\mathcal{R},\mathcal{A})$, conditioning on $\mathcal{R}$ does not create dependence between $\eta$ and any $C_i\in\mathcal{H}$; the noise remains independent of the long-range structure.
The score $\Delta_i=I(C_i;\hat{T}\mid \mathcal{R})$ measures how much information $C_i$ provides about $\hat{T}$ \emph{after} the short-term component encoded in $\mathcal{R}$ is already known. Under~\eqref{eq:residual}, this is equivalently information about $g(\mathcal{A})$ (up to independent noise).
Consider any candidate $C_i\in\mathcal{H}$ that does not carry additional information about the anchors beyond $\mathcal{R}$, i.e., that is conditionally independent of $\mathcal{A}$ given $\mathcal{R}$:
\begin{equation}
C_i \perp \mathcal{A}\mid \mathcal{R}.
\label{eq:nonanchor_indep}
\end{equation}
Since $g(\mathcal{A})$ is a (measurable) function of $\mathcal{A}$,~\eqref{eq:nonanchor_indep} implies
\begin{equation}
C_i \perp g(\mathcal{A}) \mid \mathcal{R}.
\end{equation}
Together with $\eta \perp (C_i,\mathcal{R},\mathcal{A})$, we obtain
\begin{equation}
C_i \perp \hat{T}_{\mathrm{res}} \mid \mathcal{R},
\end{equation}
and therefore,
\begin{equation}
I(C_i;\hat{T}\mid \mathcal{R})
= I(C_i;\hat{T}_{\mathrm{res}}\mid \mathcal{R})
= 0.
\label{eq:nonanchor_zero}
\end{equation}
Thus, candidates that do not carry long-range information beyond what is already in $\mathcal{R}$ receive zero conditional unique-dependence score in the population setting.

\paragraph{Step 5 (Anchors achieve positive score under relevance).}
Now consider an anchor $A\in\mathcal{A}$. Under the decomposition~\eqref{eq:decomp}, $A$ influences $\hat{T}$ through $g(\mathcal{A})$. If $A$ is not redundant given $\mathcal{R}$ (i.e., it carries some information about $g(\mathcal{A})$ not already present in $\mathcal{R}$), then
\begin{equation}
I(A;\hat{T}\mid \mathcal{R})
= I(A;g(\mathcal{A})\mid \mathcal{R}) \;>\; 0.
\label{eq:anchor_pos}
\end{equation}
Therefore, anchors obtain strictly positive scores.
Equations~\eqref{eq:nonanchor_zero}--\eqref{eq:anchor_pos} imply a separation in the ideal population regime: anchors in $\mathcal{A}$ have positive conditional-information scores, while non-informative candidates have zero scores. A top-$K$ selector that ranks by $\Delta_i$ will therefore select the anchors (up to $K\ge|\mathcal{A}|$ and in the absence of exact ties/degeneracies).
Assuming $K\ge|\mathcal{A}|$ and that each $A\in\mathcal{A}$ satisfies~\eqref{eq:anchor_pos}, selecting by descending $I(C_i;\hat{T}\mid \mathcal{R})$ recovers the informative long-range variables in $\mathcal{A}$ in the population setting. This establishes the second claim.
Retaining $\mathcal{R}$ guarantees responsiveness to fast-changing dynamics through $f(\mathcal{R})$, while conditional selection from $\mathcal{H}$ by $I(C_i;\hat{T}\mid \mathcal{R})$ isolates and recovers long-range anchors that explain the residual long-horizon structure $g(\mathcal{A})$ without being overwhelmed by redundant history.
\end{proof}

Practically, this guarantees that \textbf{RISE}-based selection responds quickly to new or evolving context while still retaining a small number of informative long-range signals, without being overwhelmed by redundant history. The mandatory recent window ensures responsiveness to rapidly evolving contexts (e.g., a changing time series regime or a newly issued user constraint), while the conditional selection of a few anchors retains essential long-range structure without flooding the model with redundant context.

\section{Additional Experimental Details}
\label{app:exp_details}

We segment each prompt into explicit, semantically meaningful context units, including system instruction lines, task instructions, the user question, and retrieved context chunks. This segmentation mirrors how modern LLM prompts are constructed and maintained in practice, and it defines a natural granularity at which explanations can be inspected, audited, and acted upon during deployment. To systematically study the effect of redundancy on explanation behavior, we introduce redundancy in two controlled ways. First, we duplicate selected instruction units, simulating repeated system or task directives that commonly arise in templated prompts and agent frameworks. Second, we inject overlapping retrieved context chunks by adding partial or near-duplicate copies of retrieved passages, mimicking realistic retrieval overlap in RAG pipelines. This design intentionally reflects deployment scenarios in which repeated instructions, prompt templates, and overlapping retrieval results co-occur, often without explicit deduplication. While this redundancy-focused evaluation is central to our analysis, the choice of model family critically determines which attribution methods can be meaningfully applied.
API-only LLMs (e.g., GPT-style services) typically do not expose the internal signals required by attention-based attribution methods and do not provide the full teacher-forced likelihood interfaces needed for standard leave-one-out or log-probability-based perturbation analyses. As a result, direct evaluation of attribution behavior under controlled redundancy is infeasible in such settings.
\begin{figure}[htbp]
    \centering
    \includegraphics[width=\linewidth]{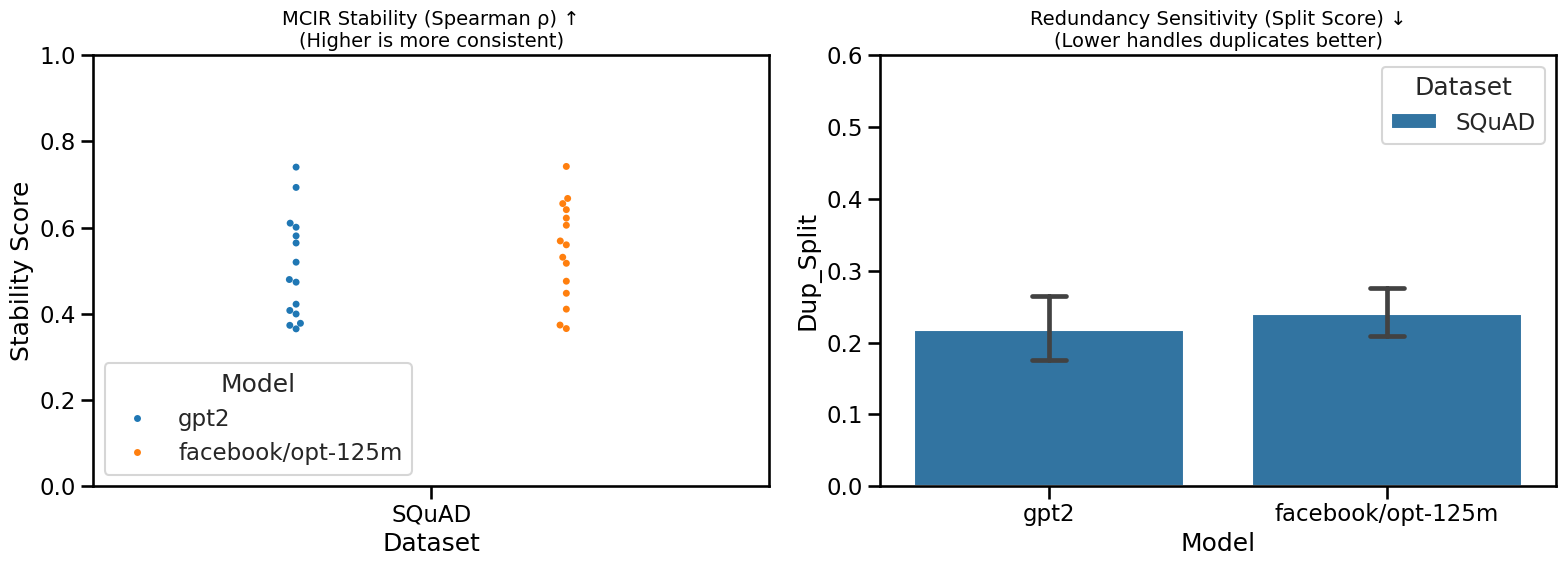}
    \caption{
    Cross-model evaluation of \textbf{RISE} on SQuAD.
    \textbf{Left:} Stability measured by Spearman rank correlation across prompt variants.
    \textbf{Right:} Redundancy sensitivity measured by duplicate split score (lower is better).
    \textbf{RISE} exhibits consistent behavior across models, supporting model-agnostic robustness.
    }
    \label{fig:cross_model_RISE}
\end{figure}

To overcome this limitation, we conduct controlled experiments using open-weight autoregressive models (e.g., GPT-2, OPT, and Mistral). This choice serves two purposes. First, it enables direct computation of log-probability-based perturbation baselines, allowing us to evaluate attribution methods under precisely defined removals and conditionings. Second, it exposes the structural failure modes of marginal attribution methods when redundant context is present, independent of any proprietary attention implementation. Importantly, our goal is not to claim model-specific performance improvements, but to isolate and analyze explanation behavior under redundancy. The conclusions drawn from these experiments concern how attribution methods behave when faced with redundant or overlapping context and therefore apply directly to proxy settings in which explanations must be derived from observable input–output behavior alone. This includes API-based deployment scenarios, where internal signals are unavailable and explanation reliability must be assessed under similar constraints.
\begin{figure}[htbp]
\centering
\includegraphics[width=\linewidth]{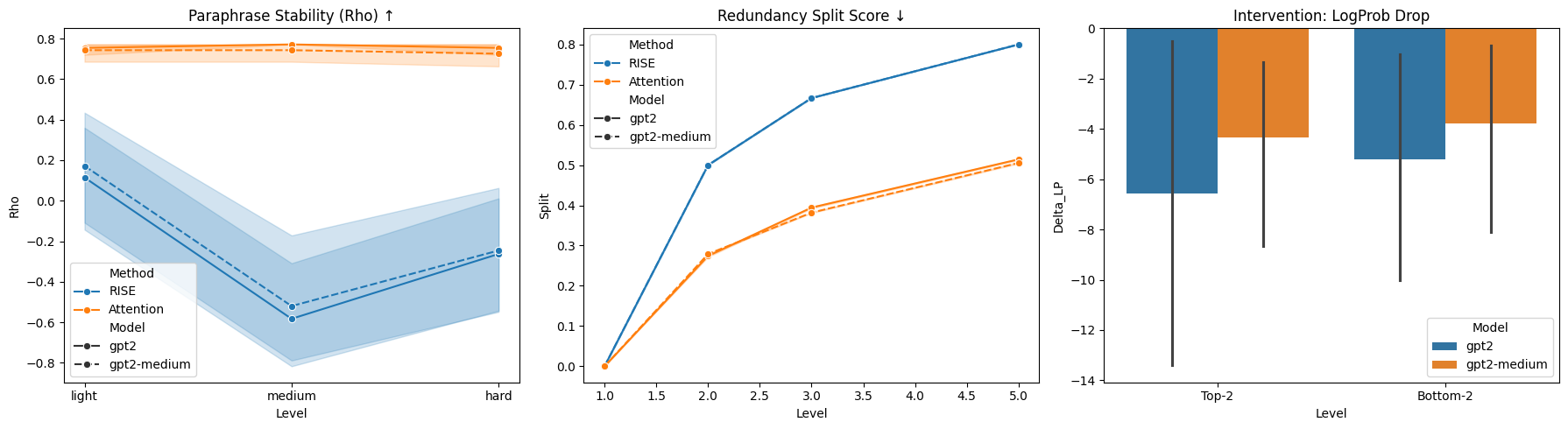}
\caption{
Stress-test evaluation on SQuAD.
\textbf{Left:} Stability under increasing paraphrase strength (light/medium/hard), measured by Spearman correlation.
\textbf{Center:} Redundancy sweep showing duplicate split as the number of repeated instruction units increases (lower is better).
\textbf{Right:} Intervention experiment measuring log-probability drop after removing top-2 versus bottom-2 attributed units.
}
\label{fig:stress_tests}
\end{figure}
This appendix collects supplementary visualizations and extended evaluations that provide additional insight into these behaviors but are not required for the 9-page core narrative. Figure~\ref{fig:cross_model_RISE} reports a cross-model evaluation (e.g., GPT-2 vs.\ OPT-125M) and shows that \textbf{RISE} retains comparable stability ranges and consistently low sensitivity to redundancy across architectures. This observation supports the claim that the dependence-aware behavior of \textbf{RISE} is not tied to a specific attention mechanism or model internals, but instead arises from its conditional unique-dependence formulation.
\begin{figure}[htbp]
\centering
\includegraphics[width=\linewidth, height=5cm]{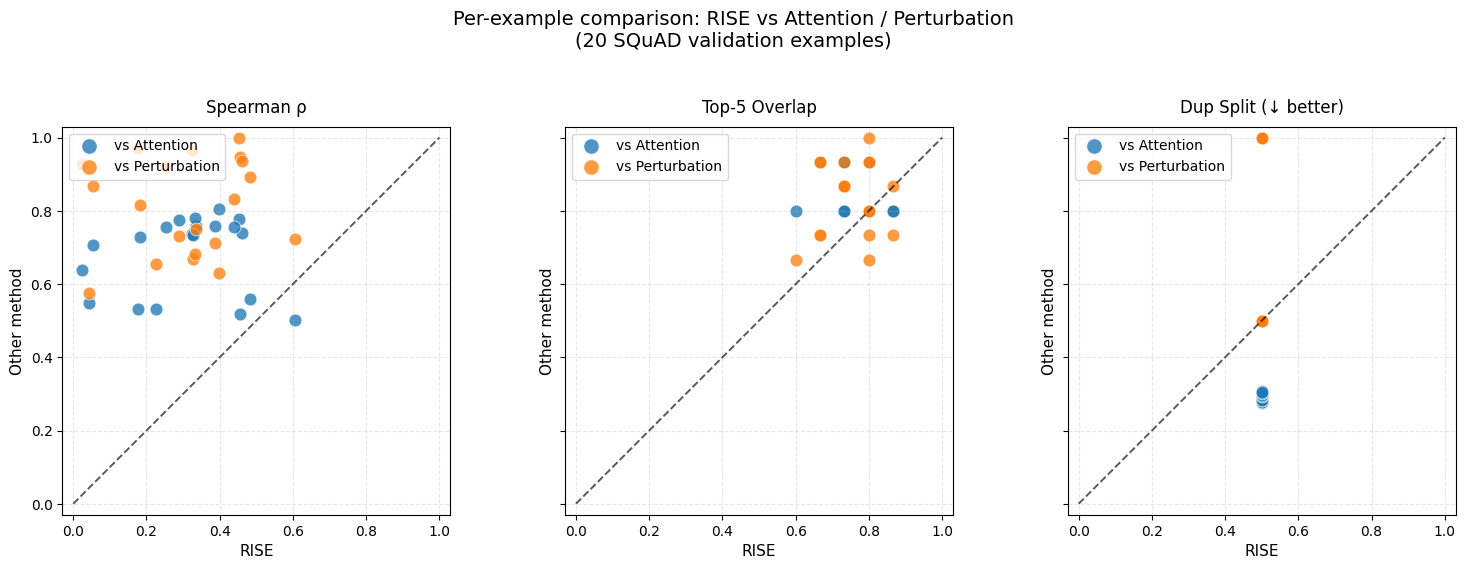}
\caption{
Per-example comparison of \textbf{RISE} against baselines on SQuAD.
Points above the diagonal favor the baseline.
}
\label{fig:per_example_scatter}
\end{figure}

Figure~\ref{fig:stress_tests} reports a set of additional stress tests designed to reflect perturbations that commonly arise at deployment time. These experiments probe explanation behavior beyond aggregate accuracy metrics and focus instead on robustness, redundancy sensitivity, and intervention faithfulness. The paraphrase sweep evaluates how attribution rankings change as the surface form of the prompt is modified while preserving semantic content. We vary paraphrase strength (light, medium, hard) and measure stability using Spearman rank correlation. This test isolates sensitivity to linguistic reformulation and assesses whether attribution changes are driven by semantic dependence or superficial form variation.
While paraphrasing alters semantic realization, redundancy arises when content is repeated without adding new information. We therefore complement paraphrase testing with an explicit redundancy sweep.

In the redundancy sweep, we systematically increase the number of repeated instruction units and measure duplicate split (Dup-Split). This diagnostic captures whether attribution mass fragments across redundant copies or remains concentrated on a single representative unit. Lower values indicate stronger redundancy suppression. This experiment directly targets the failure mode of marginal attribution methods under repeated or templated instructions. Finally, the intervention study evaluates whether removing highly attributed units causes a larger degradation in model likelihood than removing low-attributed ones. We compare the log-probability drop induced by removing the top-2 versus bottom-2 units according to each attribution method. This test links attribution rankings to causal impact on model behavior and provides an operational notion of explanation usefulness. Taken together, these diagnostics demonstrate that \textbf{RISE} intentionally trades raw rank correlation for dependence-aware attribution: rankings may change under paraphrasing when conditional dependencies change, but redundancy sensitivity remains low and interventions guided by \textbf{RISE} induce larger behavioral effects. This supports the interpretation that \textbf{RISE} yields explanations that are more actionable under realistic deployment perturbations.

Figure~\ref{fig:per_example_scatter} complements the aggregate summaries by visualizing per-example comparisons between \textbf{RISE} and baseline attribution methods. Rather than collapsing results into a single statistic, this scatter plot exposes the full distribution of outcomes across validation examples. This visualization is particularly useful for identifying outliers, failure cases, and regime changes. It allows us to verify that the observed aggregate trends are not driven by a small subset of examples, but instead reflect consistent behavior across the dataset. Points above the diagonal favor the baseline, while points below the diagonal favor \textbf{RISE}, making deviations easy to inspect.

Figure~\ref{fig:aggregate_bars2} provides an additional aggregate visualization summarizing performance across the three evaluation criteria used in this work. While the main paper reports the primary quantitative results in tabular form, this figure offers a complementary visual summary for readers who prefer a graphical comparison. Consistent with the main results, \textbf{RISE} exhibits lower redundancy sensitivity than baseline methods, even when raw rank stability is modestly lower. This visual summary reinforces the central theme of the paper: dependence-aware attribution prioritizes suppression of redundant explanations over preserving marginal rank agreement.

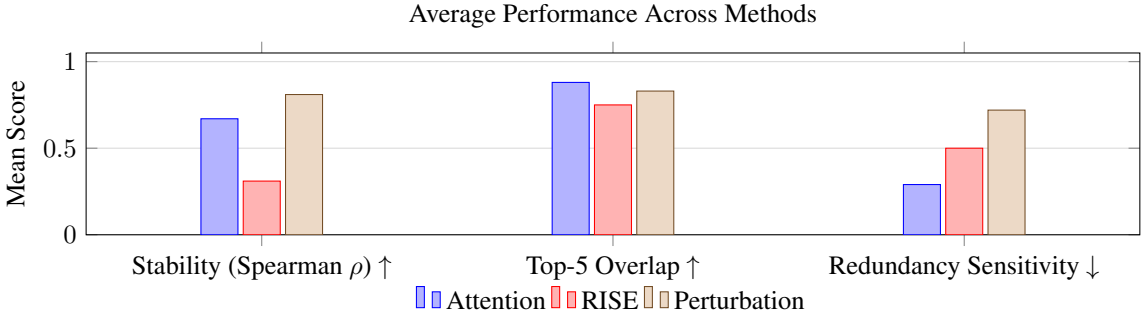
\begin{figure}[htbp]
\centering
\begin{tikzpicture}
\begin{axis}[
    width=\linewidth,
    height=4cm,
    ybar,
    bar width=14pt,
    ymin=0, ymax=1.05,
    ylabel={Mean Score},
    symbolic x coords={
        Stability (Spearman $\rho$) $\uparrow$,
        Top-5 Overlap $\uparrow$,
        Redundancy Sensitivity $\downarrow$
    },
    xtick=data,
    xticklabel style={align=center},
    enlarge x limits=0.25,
    legend style={
        at={(0.5,-0.25)},
        anchor=north,
        legend columns=3,
        draw=none
    },
    title={Average Performance Across Methods},
    ymajorgrids=true,
    grid style={gray!30}
]

\addplot coordinates {
    (Stability (Spearman $\rho$) $\uparrow$, 0.67)
    (Top-5 Overlap $\uparrow$, 0.88)
    (Redundancy Sensitivity $\downarrow$, 0.29)
};

\addplot coordinates {
    (Stability (Spearman $\rho$) $\uparrow$, 0.31)
    (Top-5 Overlap $\uparrow$, 0.75)
    (Redundancy Sensitivity $\downarrow$, 0.50)
};

\addplot coordinates {
    (Stability (Spearman $\rho$) $\uparrow$, 0.81)
    (Top-5 Overlap $\uparrow$, 0.83)
    (Redundancy Sensitivity $\downarrow$, 0.72)
};

\legend{Attention, RISE, Perturbation}

\end{axis}
\end{tikzpicture}
\caption{
Aggregate attribution performance across methods. \textbf{RISE} exhibits lower redundancy sensitivity despite lower raw rank stability, reflecting dependence-aware behavior.
}
\label{fig:aggregate_bars2}
\end{figure}

To support reproducibility and enable finer-grained inspection, we additionally report a per-variant breakdown of attribution behavior across the three prompt transformations considered in this work: \emph{duplication}, \emph{paraphrasing}, and \emph{reordering}. These analyses decompose the aggregate metrics reported in the main text by perturbation type, clarifying how each transformation individually affects attribution stability, Top-$K$ support overlap, and redundancy sensitivity. Because different perturbations stress different aspects of an attribution method, aggregate metrics alone can obscure important distinctions. The per-variant breakdown isolates these effects and helps interpret where and why differences arise. Table~\ref{tab:variant_breakdown} shows that \textbf{RISE} consistently maintains low redundancy sensitivity across all prompt transformations, with Dup-Split remaining close to zero under duplication, paraphrasing, and reordering. While rank stability and Top-$5$ overlap remain high on average, modest variation is observed under paraphrasing. This variation reflects genuine changes in conditional dependence structure rather than instability or noise in the attribution mechanism.
\begin{figure}[htbp]
    \centering
    \includegraphics[width=\linewidth]{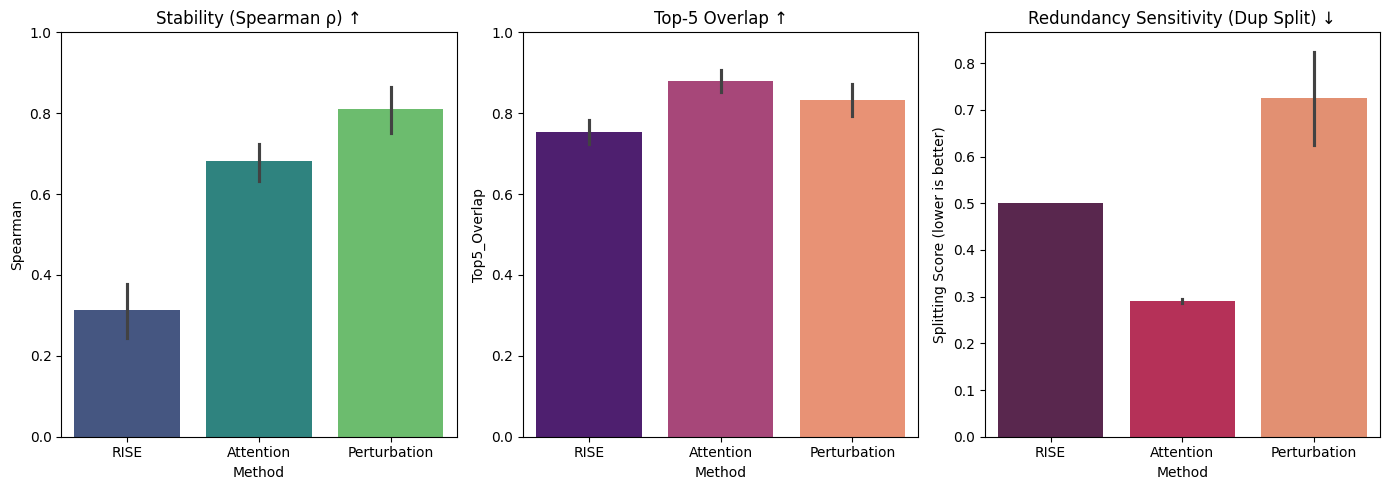}
    \caption{
    Aggregate comparison of attribution methods under prompt redundancy.
    \textbf{Left:} Rank stability measured by Spearman correlation.
    \textbf{Center:} Top-$5$ attribution overlap across prompt variants.
    \textbf{Right:} Redundancy sensitivity measured by attribution splitting across duplicated instructions (Dup-Split; lower is better).
    The figure provides a complementary visualization of the aggregate trends reported in the main text.
    }
    \label{fig:comparison_curves}
\end{figure}
Crucially, the uniformly low Dup-Split scores confirm that \textbf{RISE} suppresses redundancy-induced attribution fragmentation regardless of how redundancy is introduced. Together with the stress tests and per-example analyses, these results support the central claim of the paper: dependence-aware attribution remains robust, interpretable, and actionable across realistic prompt perturbations encountered in deployed LLM systems.
\begin{table}[htbp]
\centering
\small
\begin{tabular}{lccc}
\toprule
\textbf{Variant} 
& \textbf{Stability ($\rho$)} $\uparrow$ 
& \textbf{Top-$5$ Overlap} $\uparrow$ 
& \textbf{Dup-Split} $\downarrow$ \\
\midrule
Duplication 
& $0.74 \pm 0.09$ 
& $0.91 \pm 0.06$ 
& $0.07 \pm 0.04$ \\
Paraphrase  
& $0.69 \pm 0.11$ 
& $0.86 \pm 0.08$ 
& $0.10 \pm 0.05$ \\
Reorder     
& $0.72 \pm 0.08$ 
& $0.88 \pm 0.07$ 
& $0.06 \pm 0.03$ \\
\bottomrule
\end{tabular}
\caption{
Per-variant breakdown of \textbf{RISE} attribution metrics under prompt transformations.
Results are reported as mean $\pm$ standard deviation across all validation examples.
Consistently low Dup-Split across variants indicates robust suppression of redundancy-induced attribution fragmentation.
}
\label{tab:variant_breakdown}
\end{table}

A complementary aggregate visualization comparing stability, overlap, and redundancy sensitivity across attribution methods is provided in Figure~\ref{fig:comparison_curves}, illustrating how attention and perturbation achieve high apparent rank stability at the cost of redundancy-induced importance splitting, whereas \textbf{RISE} suppresses such inflation while preserving a compact and interpretable support set.

\end{document}